\newtheorem{theorem}{Theorem}[section]
\def\BibTeX{{\rm B\kern-.05em{\sc i\kern-.025em b}\kern-.08em
    T\kern-.1667em\lower.7ex\hbox{E}\kern-.125emX}}
\begin{document}
\title{Defocus Aberration Theory Confirms Gaussian Model in Most Imaging Devices}
\author{Akbar Saadat \thanks{The author works with the R\&D Department  of Iranian railways (RAI). This work has developed as an update to his academic research on ''Depth Finding by Image analysis'' since 1995.\\E-mail: saadat\_a@rai.ir, Tel: (+98)9123840343}}

\maketitle
\markboth{Akbar Saadat, Under Review Process by  Manuscript ID TPAMI-2025-02-0453. }{Main Manuscript}

\begin{abstract}
Over the past three decades, defocus has consistently provided groundbreaking depth information in scene images. However, accurately estimating depth from 2D images continues to be a persistent and fundamental challenge in the field of 3D recovery.  Heuristic approaches involve with the ill-posed problem for inferring the spatial variant defocusing blur, as the desired blur cannot be distinguished from the inherent blur. Given a prior knowledge of the defocus model, the problem become well-posed with an analytic solution for the relative blur between two images, taken at the same viewpoint with different camera settings for the focus. The Gaussian model stands out as an optimal choice for real-time applications, due to its mathematical simplicity and computational efficiency. And theoretically, it is the only model can be applied at the same time to both the absolute blur caused by depth in a single image and the relative blur resulting from depth differences between two images. This paper introduces the settings, for conventional imaging devices, to ensure that the defocusing operator adheres to the Gaussian model. Defocus analysis begins within the framework of geometric optics and is conducted by defocus aberration theory in diffraction-limited optics to obtain the accuracy of fitting the actual model to its Gaussian approximation. The results for a typical set of focused depths between  $1$ and $100$ meters, with a maximum depth variation of $10\%$ at the focused depth, confirm the Gaussian model's applicability for defocus operators in most imaging devices. The findings demonstrate a maximum Mean Absolute Error $(\!M\!A\!E)$ of less than $1\%$, underscoring the model's accuracy and reliability.
\end{abstract}

\begin{IEEEkeywords}
DFD, OTF, Gaussian model, Camera Settings.
\end{IEEEkeywords}

\section{Introduction}
\IEEEPARstart{C}{omputer} vision is the only solution for making an active interaction between a machine and its environment to control. It deals with two-dimensional images of a scene as input to extract the third dimension or depth at each image point as output. The earliest solutions to the problem of obtaining depth were geometric-based methods such as stereo vision (\cite{ZLCa2015, URDh1989}) and structure from motion (\cite{SULI1979, DAFo2003}). These methods have been the subject of extensive investigation over the past four decades. The precision of depth estimation is contingent upon the accurate correspondence \cite{DSch2002} between the images, which is fundamental to the geometric algorithms grounded in triangulation. Due to the method's computational intensity and susceptibility to significant errors, the research communities were motivated to investigate alternative solutions to the correspondence problem.

Since the initial introduction of focal gradients by A.P. Pentland \cite{APPe1987} as a novel source of depth information, the accurate estimation of depth from two-dimensional images—without requiring direct correspondence—has persisted as a significant challenge in the domain of three-dimensional recovery. In the context of capturing images with a limited depth of field, the occurrence of defocus blur is inevitable. This is caused by the scene points being out of focus or shifted away from the camera's focal plane within the scene. The amount of shift is directly correlated with the depth of the scene points, according to the geometric optics. Throughout the past three decades, numerous methodologies have been proposed to address the Depth From Defocus (DFD) problem \cite{APPe1987, MSub1988,MSub1990,YXio1993, Jens1993,SKNa1994,MSub1994,ANRa1997,ANRa1998,MWat1998,ANRa1999,DRaj2003,ANRa 2004}. These researchers are in the group of scientists who have made the most significant contributions to the DFD techniques, established foundation for subsequent advancements and introduced several mathematical models and algorithms that have influenced the development of the field. 

The absence of an end proof model for the defocus operator has led DFD methods being associated with the integration of sparse and heuristic features, such as image gradients or moments to estimate depth. The foundation on heuristics and assumptions about depth variation within the scene has constrained the application of DFD techniques in situations where estimating depth information from alternative sources, such as stereo vision, is either challenging or unavailable. While DFD is not inherently less sensitive to error than stereo or motion, it is more robust due to the 2D nature of the aperture. Consequently, it should be preferred over small baseline stereo if the resolution, obtainable with DFD implementations, is sufficient \cite{YYSc1998}. Research collectives explored hardware modifications initiated by \cite{ALev2007} to enhance resolving power with synthetic apertures, replacing traditional image formation kernels with custom-oriented designs. This approach aimed to better characterize image formation and facilitate depth inference from defocus. However, deriving the image formation model required extensive local camera calibration and the use of point sources or predetermined high-frequency patterns at various potential depths \cite{MDel2012}. This posed a significant challenge for advancing the DFD method within research groups focused on three-dimensional recovery, as achieving the necessary calibration was difficult without relying on additional sources or patterns.

Research communities commissioned themselves to touch the human skills on inferring 3D structure or depth from a single image. On the early steps, the public sentiment towards the heuristic features, such as image gradients or moments, reverted to the key point descriptors \cite{MHas2019} such as speeded-up robust features (SURF) \cite{HBay2006}, pyramid histogram of oriented gradient (PHOG) \cite{Abos2007}, scale invariant feature transform (SIFT) (\cite{DGlo1999, DGLo2004}), and probabilistic graphs such as Conditional Random Field (CRF)\cite{JLaf2001} and Markov Random Field (MRF) \cite{GRCr1983}. These features were considered for depth estimation in a single image with parametric \cite{ASax2009}and non-parametric (\cite{CLiu2009,BLiu2010}) machine learning procedures. For inferring depth from a single image a comprehensive database of the world images is required with their 3-D coordinates. This is highlighted in \cite{BCRu2009} for integration all present RGB Depth databases.  Regardless of how far away is, \cite{BLiu2010} has exploited the availability of a pool of images with known depth to formulate depth estimation as an optimization problem. The research area which was tightened by enforcing geometric assumptions to infer the spatial layout of a room in \cite{VHed2010} and \cite{DCLe2010} or outdoor scenes in \cite{AGup2010}, was expanded by handcrafted features in \cite{ASax2009, BLiu2010, KKar2014, LLad2014}  and \cite{MLiu2014} for more general scenes. 

By the emergence of deep learning architectures all responsibilities for feature extraction, feature detection and  mapping features to depth delivered to the multi layers of Convolutional Neural Networks (CNN) \cite{FLiu2016,FLiu2015,DEig2014},  which  infer directly depth map from the image pixel values. In this approach, there is no basic difference between depth estimation and semantic labelling, as jointly performing both can benefit each other \cite{LLad2014}. The potential to create semantic labels, which provide meaningful annotations to different parts of an image, can significantly enhance depth perception. This capability is particularly useful in guiding the depth estimation process. By incorporating semantic information, the system can better understand the context and relationships within the image, leading to more accurate depth predictions. This approach supports the validity and effectiveness of using a multistage inference process in CNNs. The generation of semantic labelling with the objective of guiding depth perception in \cite{BLiu2010} is an effective realization of that support.

CNNs have their own Limitations in 3D recovery of a scene image. With precisely calibrated architectures and hyper parameters, they can learn features from the training set from scratch during the training period. They cannot do anything more than extrapolation for what is beyond this limited space, even if given infinite time to completely learn the training set. Literature has documented several shortcomings of CNNs. For instance, \cite{PWan2016} points out that CNNs often fail to ensure their predictions align with the planar regions depicted in the scene. Additionally, existing CNN architectures (e.g., VGG-16 \cite{KSim2015}) can not predict good surface orientations from depth, and pooling operations and large receptive fields makes current architectures perform poorly near object boundaries \cite{QiXL2020}. And, in order to succeed in challenging image regions, such as areas near depth discontinuities, thin objects and weakly textured zones, it is necessary to learn a broad range of principles and features that limit the possibility of focusing on important details \cite{WIqb2023}. To address the aforementioned limitations, these researchers devised a combination of conventional hand-crafted and deep learning-based methods, collectively referred to as hybrid techniques. In these methods, the predictions of deep networks are refined by the features extracted from the input image. Furthermore, the results of deterministic features, such as edges, in the sparse locations where the features are available, are replaced with the networks' predictions based on an image formation model.

Setting a model for image formation with a proven relationship to depth allows the reconstruction of a dense depth map entirely by a hand-crafted feature with no reliance on deep networks. The research conducted in \cite{ASaa2017} indicated that a hand-crafted feature with the Gaussian model for the defocusing operator could offer superior performance compared to learned features, including those derived from deep learning. The advantages of computational and analytical simplicity in both the frequency and spatial domains for real-time applications make the Gaussian model highly appealing for DFD. Theoretically, it is the unique model that can simultaneously address both the absolute blur caused by depth in a single image and the relative blur resulting from depth differences between two images. Dealing with the aim to contribute the current theory of DFD, this paper focuses on the role of diffraction-limited optics in validating the Gaussian model for the defocus operator in conventional imaging systems. The study reveals how diffraction-limited systems support the Gaussian approximation of the defocus operator. While the paper does not deal with any experiment for the contribution, it highlights the capability of the conventional imaging systems to support the Gaussian model in a wide range of depth finding and under  mild conditions on general settings.

Paper organization for driving the settings for a general imaging device to ensure that the defocusing operator conforms to the Gaussian model is as follows. Section 2 highlights general aspects of the existing theory of DFD in relation to image formation models.  The defocusing Optical Transfer Function (OTF) at a single wavelength is illustrated in Section 3, while its characteristics are discussed in Section 4. The OTF under ambient illumination and its Gaussian approximation are detailed in Section 5. In Section 6, the investigation of general settings of conventional imaging systems is conducted under reasonable thresholds of approximation error, with the introduction of those that confirm the Gaussian Model. In the final section of the paper, the key findings and future goals are outlined.

\section{General Aspects Of DFD Theory}
\noindent DFD theory is derived from the image formation model in geometric optics, which ignores the wave nature of light and treats it as rays. The imaging system in geometric optics is characterized by the parameters: $A$ for lens diameter or aperture, $f$ for focal length and $d_i$ for the distance of the image plane to the lens. In the model, all rays parallel to optical axis converge to the focal point and all rays emerge from a single scene point on the focused plane illuminate the image point on the image plane, as shown in Fig.\ref{Fig1} by green and blue colors. The fundamental equation of thin lenses describes the focusing distance $d_f$ in the scene by (\ref{LensLaw}).
\begin{equation}
\label{LensLaw}
d_f = \frac{f d_i}{d_i-f}
\end{equation}

\begin{figure}[!t]
\centering
\includegraphics[width=2.5in]{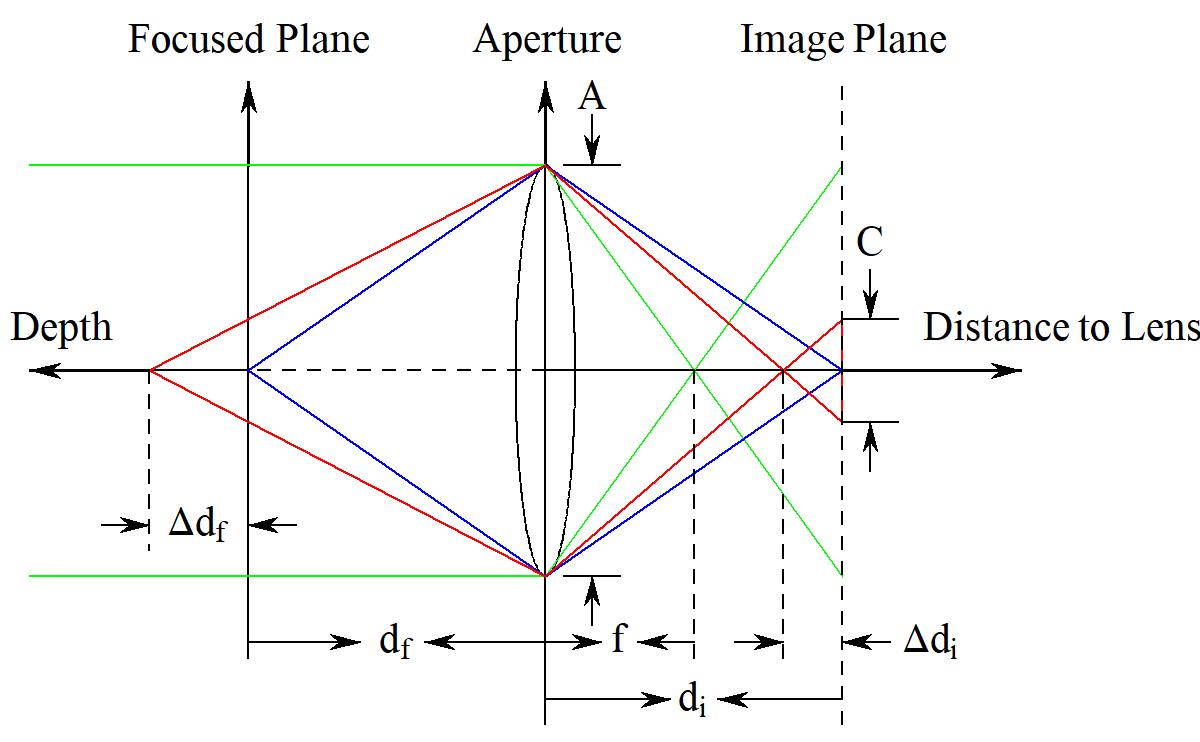}
\caption{Image Formation in geometric optics. Both triples $(f,d_i,d_f)$ and $(f,d_i+\Delta d_i,d_f+\Delta d_f)$ are described by the lens law.}
\label{Fig1}
\end{figure}

Any deviation of the scene point from the focal plane with the amount of $\Delta d_f$ results in a shift of the focused image  from the image plane by the amount of $\Delta d_i$. Again, the new position pairs $d_f+\Delta d_f$  for the scene point and  $d_i+\Delta d_i$ for the focused image are related by the thin lens law as  (\ref{Eq2}).
\begin{equation}
\label{Eq2}
d_f +\Delta d_f = \frac{f (d_i+\Delta d_i)}{d_i+\Delta d_i-f}
\end{equation}
This shift spreads the image point to the blur circle, designated as the Circle of Confusion (CoC) with a diameter C determined by similar triangles as (\ref{Eq3}).
\begin{equation}
\label{Eq3}
 \frac{C}{A} = \frac{\Delta d_i}{d_i+\Delta d_i}
\end{equation}
The signs of $\Delta d_i$ and $\Delta d_f$ are derived from the directions of the horizontal axes, designated as "Depth" and "Distance to Camera Lens," within rectangular coordinate systems with vertical axes positioned at the focal and aperture planes, as illustrated in Fig.1.  Eliminating $\Delta d_i$    between  (\ref{Eq2}) and  (\ref{Eq3}) relates depth of the scene point $\Delta d_f$ to the blur circle $C$ through the settings parameters by (\ref{Eq4}).
\begin{equation}
\label{Eq4}
\Delta d_f  = \frac{C d_f}{C_o-C}  \hspace{1cm}  C_o=\frac{A f}{d_f-f} 
\end{equation}

The theory of DFD is completed by introducing four equations between four unknown variables on two images of a point in the scene to obtain its depth. These variables are two pairs of the distance of the image point to the focused plane and the diameter of the blur circle, in two images. The first equation is the depth map as  (\ref{Eq4}) that describes the relation between the distance of the image point to the focused plane and the diameter of the blur circle in one image. The second equation is the same as the first, but for the other image. The third equation expresses the fact that absolute difference of the depth values is equal to the known distance between the focused planes.  The last equation figures out the relation between the diameter of the blur circles in two images.

When the defocus operator aligns with the Gaussian model, the size of each blur circle will be proportional to the model’s standard deviation. (This proportionality will be confirmed through the subsequent sections.) Moreover, the defocus operator that transforms the sharper image into the second one follows a Gaussian model. The fourth equation sets the standard deviation of this model equal to the square root of the difference between the squares of the standard deviations of the defocusing operators.Therefore, the image computation in DFD involves extracting the standard deviation of the defocusing model at a specific image point from a sharper image.This can be achieved through arbitrary sequence of convolutions to revisit the image point without requiring windowing precautions. The prerequisite for this process is that the defocus operator conforms to the Gaussian model. Consequently, the primary focus of this paper is to evaluate the characteristics of the analytic solution of the defocusing operator in diffraction-limited optics to verify the Gaussian model.

The solution is based on the Intensity Image Response (IIR) of the imaging system shown in Fig.\ref{Fig1} when exposed to a point source. In the spatial domain, the IIR is referred to as the Point Spread Function (PSF), and in the frequency domain, it is known as the Optical Transfer Function (OTF). For diffraction-limited conventional imaging systems, the PSF is typically obtained by modeling the real optical system of several lens layers to determine the optical path difference using ray tracing software simulations. According to \cite{FSon2024}, the root mean square errors of both the analytical and simulated methods were found to be less than 3 percent in the weak and medium defocus range for three examples of monochrome conventional imaging systems. This suggests that the analytic solution for the system in Fig.\ref{Fig1} is a reasonable approximation for conventional imaging systems.

\section{Monochrome OTF for\\ Defocused Imaging System}
\noindent In contrast to geometric optics, the image of a focused scene point is not a single point, and the CoC is not a uniformly bright disk in the image plane. Looking in the diffraction limited optics, the solution in the spatial frequency domain appears in the case of monochromatic light illuminating at the wave length $\lambda$. In this case, the IIR of a non-coherent imaging system to a point source of an object at the focused distance is given \cite{JWGo2005} by the OTF $H(f_x,f_y )$ as (\ref{Eq5}).
\begin{align}
&H(f_x,f_y;P)  =  \label{Eq5}\\
&\frac{\iint\limits_{-\infty}^{+\infty}P(x+\frac{\lambda d_if_x}{2},y+\frac{\lambda d_if_y}{2})P^*(x-\frac{\lambda d_if_x}{2},y-\frac{\lambda d_if_y}{2})dxdy}{\iint\limits_{-\infty}^{+\infty}|P(x,y|^2dxdy} \nonumber
\end{align}
$(f_x,f_y )$ is the spatial frequency pairs related to the $(x,y)$ spatial domain. $P(x,y)$ is the pupil function which is unity for an aberration free system within the aperture and zero otherwise.  $P^*$ is generally the complex conjugate of $P$.

The effects of aberrations caused by defocus is generating a phase shift for the wave front that leaves the pupil.   If the phase shift at the point $(x, y)$ is expressed by $\kappa W(x,y)$ for the phase number $\kappa=2\pi/\lambda$, then with the effective path-length error $W(x,y)$ and $j^2=-1$, the complex aperture would be $P_{def} (x,y)=P(x,y)exp(j\kappa W(x,y))$. The path-length shift $W(x,y)$ is related  \cite{JWGo2005} to the given parameters in Fig.\ref{Fig1} by  (\ref{Eq6}).
\begin{equation}
\label{Eq6}
W(x,y)=\frac{-1}{2}(\frac{1}{d_i+\Delta d_i}-\frac{1}{d_i})(x^2+y^2)\stackrel{\triangle}{=}\frac{A_R}{R^2}(x^2+y^2)
\end{equation}
The number $A_R $ is the maximum shift at the boarder of the aperture where $\sqrt{x^2+y^2}=R=A/2$.  This number indicates the degree of defocusing effects in diffraction limited optics, and sounds the size of the CoC, the same indicator in geometric optics. There is a linear relationship between $A_R$ in (\ref{Eq6}) and $C$ in (\ref{Eq3}) as outlined in (\ref{Eq7}).
\begin{equation}
\label{Eq7}
\frac{A_R}{C}=\frac{-R^2}{2}(\frac{1}{d_i+\Delta d_i}-\frac{1}{d_i})\frac{d_i+\Delta d_i}{A\Delta d_i}=\frac{R^2}{2Ad_i}=\frac{A}{8d_i}
\end{equation}
This completes the definition of the elements in the expression of the OTF for the defocused imaging system given by  (\ref{Eq8}).
\begin{equation}
\label{Eq8}
H_{def}(f_x,f_y)  = H(f_x,f_y;P_{def})
\end{equation}
 
For the circular aperture with the radius $R$, described by the unity disk function $circ()$ as $P(x,y)=circ(\sqrt{x^2+y^2}/R)$, both 2-D functions $H(f_x,f_y )$  and $H_{def} (f_x,f_y )$ in the rectangular coordinates are circularly symmetric with 1-D functional forms $H^o (\rho)$ and $H_{def}^o (\rho)$ in the polar coordinates with $\rho^2=f_x^2+f_y^2$. Considering  the coherent cut off frequency \cite{JWGo2005}  $\rho_o=R/(\lambda d_i )$  and the definition $\theta \stackrel{\triangle}{=}\arcsin(\rho/2\rho_o )$ for $\rho\leq2\rho_o$, the expression derived in literature from $H(f_x,f_y ;P)$ for $H^o (\rho)$ can be simplified to  (\ref{Eq9}).
\begin{equation}
\label{Eq9}
H^{o}(\rho)=\frac{2}{\pi}
\begin{cases}
\theta-\frac{1}{2}\sin(2\theta) & \rho\leq2\rho_o \\
0&  \rho>2\rho_o
\end{cases}
\end{equation}
In the case of square pupil, the function $H_{def} (f_x,f_y )$ is expressed as a product of two identical one-dimensional functions, each of which is a function of $f_x$ and $f_y$, respectively. But, there is no given expression in literature for $H_{def}^{o} (\rho)$ in case of circular pupil. The expression for $H_{def}^{o} (\rho)$ is derived from $H_{def} (f_x,f_y )$ in the Appendix \ref{AppendixEq10} as  (\ref{Eq10}).
\begin{align}
&H_{def}^{o}(\rho)  =\frac{4}{\pi} \label{Eq10} \\
&
\begin{cases}
\int\limits_{0}^{1-\cos(\theta)}\sqrt{1-(x+\cos(\theta))^2}\cos(8\pi \frac{A_R}{\lambda}x\cos(\theta))dx & \rho\leq2\rho_o \\
0&  \rho>2\rho_o
\end{cases}  \nonumber
\end{align} 
The parameter $A_{R}/\lambda$  in the given expression for $H_{def}^{o}(\rho)$ represents directly the amounts of defocusing aberrations. It could be verified that $H_{def}^{o}(\rho)$ is  equal to $H^{o}(\rho)$for the aberration free system in which $A_R/\lambda$ is identically zero. For various values of $A_{R}/\lambda$ plots of $H_{def}^{o}(\rho)$ are shown in  Fig.\ref{Fig2}. It is interesting to compare it with the cross section of OTF for square pupil  \cite{JWGo2005} nominated by $H_{def}^{G}(\rho)$  in Fig.\ref{Fig3}. While in both plots spatial high frequencies are attenuated more naturally at higher values of $A_{R}/\lambda$, the circular pupil has a band pass frequency wider than the square pupil for the same non zero values of $A_{R}/\lambda$. By increasing $A_R/\lambda$ the location of the first zero appears at the same normalized frequency $0.5$ on the horizontal axis. This occurs at slightly higher degree of defocus ($A_{R}/\lambda\approx 0.64$) for the circular pupil than ($A_{R}/\lambda=0.5$)   for the square pupil.
\begin{figure}[!t]
\centering
\includegraphics[width=2.5in]{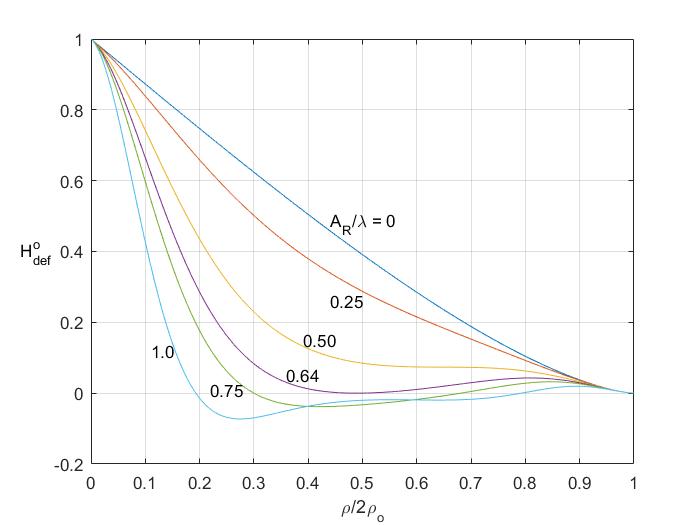}
\caption{OTF for the defocused imaging system with $A_{R}/\lambda$ as a parameter.  Circular pupil with the diameter $A=2R$.}
\label{Fig2}
\end{figure}
\begin{figure}[!t]
\centering
\includegraphics[width=2.5in]{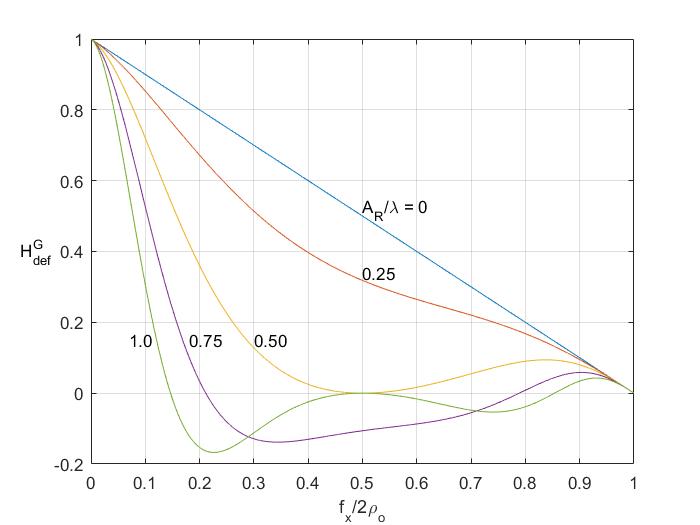}
\caption{OTF for the defocused imaging system with$A_{R}/\lambda$ as a parameter Cross section of OTF for square pupil of width $2R$ along the $f_x$ axis \cite{JWGo2005}.}
\label{Fig3}
\end{figure}
 
Concentrating on the circular pupil for the rest of this paper, the defocusing OTF at any wave length $\lambda$ is obtained by dividing the transfer function of  $H_{def}^{o}(\rho)$ in (\ref{Eq10}) to $H^{o}(\rho)$ in (\ref{Eq9}). This can be evaluated in Fig.\ref{Fig2} by normalizing each  plot’s values to the corresponding values for $A_R/\lambda=0$ for the selected values of $A_{R}/\lambda$.  To approach an analytic solution to the integral in equation (\ref{Eq10}),  the multiplicand squared function of the integrand is replaced with one of the following functions in (\ref{Eq11}) for $k=1,2 \text{ or } 3$, since its approximation generates analytic solution for the integral.
\begin{equation}
 \label{Eq11} 
\sqrt{1-(x+\cos(\theta))^2}  \approx\sin(\theta)(1-(\frac{x}{1-\cos(\theta)})^{k})
\end{equation}
All approximations in  (\ref{Eq11})  are convex functions in the integration interval and have the exact values of the original function at both limits $x=0$ and $x=1-\cos(\theta)$. It is shown in the  Appendix \ref{AppendixEq11} that there is a unique $k(\theta)$ that satisfies this property for the mid point $x=\cos(\theta/2)-\cos(\theta)$ also, and  the mean value of $k(\theta)$ over the range $0<\theta<\pi/2$ is $2.70428$.  The result obtained by applying the simplest approximation for the case $k=1$ in (\ref{Eq10}) is presented in (\ref{Eq12}).
\begin{align}
&H_{def}^{o}(\rho)   \approx \frac{4}{\pi} \label{Eq12} \\
&
\begin{cases}
\sin^{2}(\frac{\theta}{2})\sin(\theta)sinc^{2}(8\frac{A_{R}}{\lambda}\cos(\theta)\sin^{2}(\frac{\theta}{2})) & \rho\leq2\rho_o \\
0&  \rho>2\rho_o
\end{cases}\nonumber
\end{align} 
Now, the defocusing OTF at the wave length $\lambda$ is approximated by  (\ref{Eq13}).
\begin{align}
&H_{def}^{\lambda}(\rho) =\frac{H_{def}^{o}(\rho)}{H^{o}(\rho)}  \approx  \label{Eq13}\\
&
\begin{cases}
\frac{2\sin^{2}(\frac{\theta}{2})\sin(\theta)sinc^{2}(8\frac{A_{R}}{\lambda}\cos(\theta)\sin^{2}(\frac{\theta}{2})) }{\theta-\frac{1}{2}\sin(2\theta) }  & \rho\leq2\rho_o\\
0&  \rho>2\rho_o
\end{cases} \nonumber
\end{align}
This approximation provides clearer and more transparent features than the non-closed integral form of the exact function, making it more effective for characterizing  $H_{def}^{\lambda}(\rho)$ in the following section.

\section{Characteristics of Monochrome OTF}
\noindent The exact value of $H_{def}^{\lambda}(\rho)$  at $A_{R}/\lambda=0$ is unity across the range $\rho\leq2\rho_o$, functioning as an all-pass filter that does not alter the amplitude of any frequency components. The  approximation form of $H_{def}^{\lambda}(\rho)$  is primarily influenced  by the $sinc^{2}()$ function. Other factors serve merely as a monotone decreasing multiplicative term, with a maximum attenuation $15\%$ at the highest spatial frequency. Consequently, the $sinc^{2}()$ function plays a crucial role in shaping the behavior of $H_{def}^{\lambda}(\rho)$. The zero crossings of the function are estimated by the frequencies that set the argument of the $sinc^{2} (l)$ function to  integer values, specifically  $l=1,2,3,\dots$.  Similarly, the locations of the extremum values,  where the function touches its maximum or minimum values, occur at half-integer values, specifically $l=1.5,2.5,3.5,\dots$.  Setting the argument  $ 8\frac{A_{R}}{\lambda}\cos(\theta)\sin^{2}(\frac{\theta}{2})=l$ yields  (\ref{Eq14}) for identifying the locations of  zero crossings and extremum frequencies for the approximation of $H_{def}^{\lambda}(\rho)$.
\begin{align}
&\cos(\theta) =\frac{\rho}{2\rho_{o}} = \frac{1}{2} \pm \sqrt{\frac{1}{4}-\frac{l}{4A_{R}/\lambda}}\label{Eq14} \\
&\text{for}\quad \frac{A_R}{\lambda} > l  \quad \text{and}\quad  l=1,1.5,2,2.5,3,\ldots \nonumber
\end{align}
This equation indicates that the frequencies for both the zero crossings and extremums exist if the defocusing parameter $ \frac{A_R}{\lambda}$ is larger than a certain threshold value. These frequencies appear in pairs, symmetrically positioned around the center frequency within the range $\rho\leq2\rho_o$. If it exists, the fringe period can be approximated by the absolute difference between the first two zero crossings before the mid frequency, which occur at $l=2$ and $l=1$ in  (\ref{Eq14}). This approximation, provided by (\ref{Eq15}), quantifies the spacing between the fringes and helps to understand the periodicity of the pattern.
\begin{align}
\Delta\rho &= 2\rho_{o}\left([ \frac{1}{2} - \sqrt{\frac{1}{4}-\frac{2}{4A_{R}/\lambda}}] - [\frac{1}{2} - \sqrt{\frac{1}{4}-\frac{1}{4A_{R}/\lambda}}]\right)\label{Eq15}\nonumber\\
&\approx \frac{\rho_{o}}{2.38\frac{A_R}{\lambda}-2.88} \quad \text{for }\frac{A_R}{\lambda}>2
\end{align}
As the defocusing measure $A_{R}/\lambda$ increases, the first zero crossing emerges, followed by the appearance of the first fringe pattern before the second zero crossing. The threshold values for the first and second zero crossings are $1$ and $2$. Beyond this threshold, the frequency of the fringe pattern varies monotonically with $A_{R}/\lambda$. Plots of the exact value of $H_{def}^{\lambda}(\rho)$ are shown in Fig.\ref{Fig4}. This figure generally confirms the characterization derived from (\ref{Eq14}) and (\ref{Eq15}) regarding the emergence of zero crossings and fringe patterns. For example, the actual threshold values of $A_{R}/\lambda$ for the appearance of the first zero crossing ($0.64$ in Fig.\ref{Fig4} relates to the value $1.0$ to validate  (\ref{Eq14})) and the first fringe pattern ($1.10$ relates to the value $1.5$ in (\ref{Eq14}))  are sufficiently close to the estimations by mid point $(0+1)/2$ and $(1+1.5)/2$, respectively. Applying larger values for $k$ in (\ref{Eq11}) improves the approximations for $H_{def}^{\lambda}(\rho)$ and yields more precise results. However, this would not effectively facilitate the characterization of $H_{def}^{\lambda}(\rho)$. The current approximation is reasonable for extracting the general features of $H_{def}^{\lambda}(\rho)$.
\begin{figure}[!t]
\centering
\includegraphics[width=2.5in]{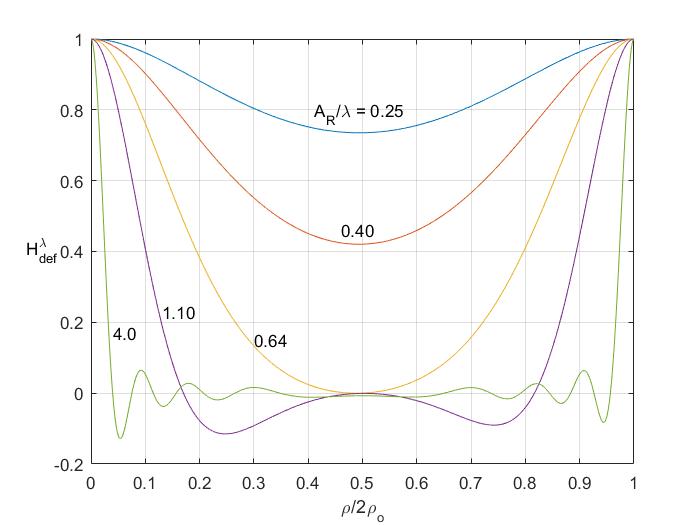}
\caption{Defocusing OTF for defocused imaging system at fixed wave length $\lambda$  and with$A_{R}/\lambda$ as  parameter.}
\label{Fig4}
\end{figure}
\begin{figure}[!t]
\centering
\includegraphics[width=2.5in]{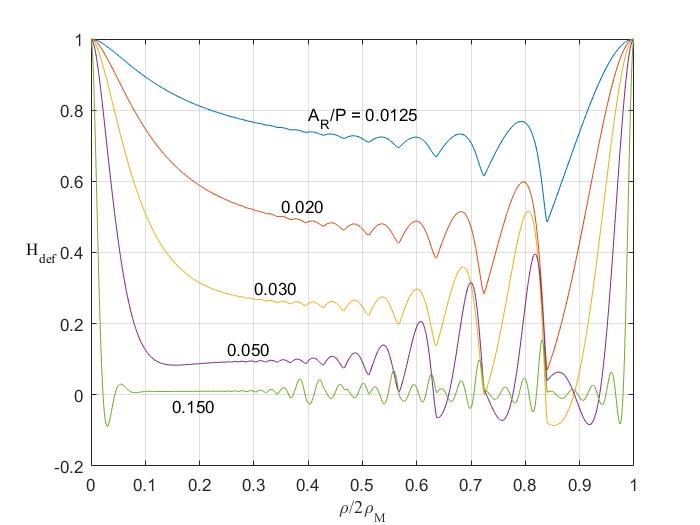}
\caption{Defocusing OTF for defocused imaging system with the pixel width $P=5.6\mu m$ at ambient illumination and maximum coherent cut off frequency $\rho_{M}=\frac{R}{\lambda_{min} d_i } \text{ for } f=15mm, d_f=1m \text{ and } f_n=\frac{f}{2R}=1.4$.}
\label{Fig5}
\end{figure}

\section{OTF Under Ambient Illumination\\ and It's Gaussian Approximation}
\noindent The impact of natural lighting on the imaging device can be characterized by the spectral energy distribution of light across its range of constituent wavelengths. The defocusing OTF under ambient illumination, with the spectral energy distribution $ \phi(\lambda)$ in the range $(\lambda_{min},\lambda_{\!m\!a\!x})$, is described by (\ref{Eq16}).
\begin{equation}
\label{Eq16}
H_{def}(\rho) = \frac{\int_{\lambda_{\!m\!i\!n}}^{\lambda_{\!m\!a\!x}}\phi(\lambda)H_{def}^{\lambda}(\rho)d\lambda}{\int_{\lambda_{\!m\!i\!n}}^{\lambda_{\!m\!a\!x}}\phi(\lambda)d\lambda}
\end{equation}
The ambient illumination is modeled as black body radiation at temperature $T$, that is described by Planck's Law in the MKS system as  (\ref{Eq17}). 
\begin{equation}
\label{Eq17}
\phi(\lambda) = \frac{8\pi h c}{\lambda^5}\frac{1}{e^{h c/(\lambda k_B T)}-1}
\end{equation}
The constants in (\ref{Eq17}) are: Plank constant $h=6.63\times10^{-34}  Js$, Light speed $c=3\times10^8  m/s$ and Boltzmann constant $k_B=1.38\times10^{-23}  J/K$.  To approach the solar spectrum, other parameters in (\ref{Eq17}) are set as  $\lambda_{\!m\!i\!n}=200 nm, \lambda_{\!m\!a\!x}=2 \mu m \text{ and } T=6000^o K$. This temperature is approximately equivalent to the surface temperature of the Sun; therefore, $\phi(\lambda)$ must be scaled for Earth's surface. Nonetheless, $H_{def}(\rho)$ does not require this scaling, as any scaling factor is nullified by (\ref{Eq16}). The elimination of $\lambda$ in $H_{def}(\rho)$  results in the emergence of $ A_R$ as the novel independent parameter. This parameter is expressed in terms of pixel width in the image plane to effectively convey a sense of defocus.

\begin{figure}[!t]
\centering
\includegraphics[width=2.5in]{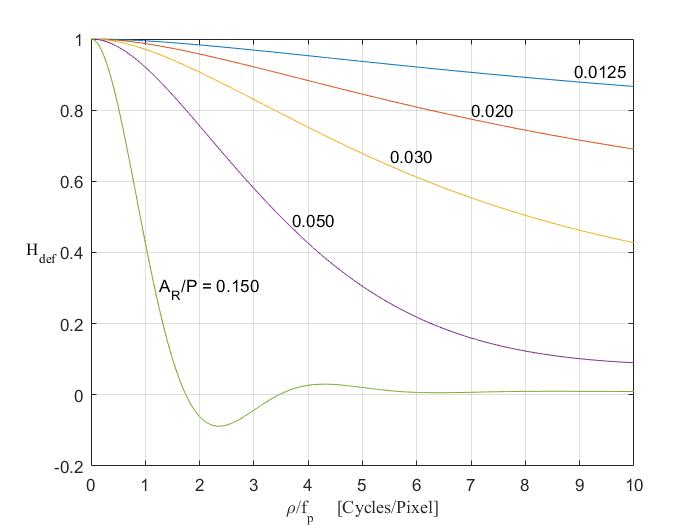}
\caption{Defocusing OTF for starting low frequency part in Fig.\ref{Fig5} versus spatial frequency in terms of cycles per pixel width. There is a monotone bell shape for all plots over the frequency range $\rho<f_P$ that supports Gaussian model.}
\label{Fig6}
\end{figure}     
\begin{figure}[!t]
\centering
\includegraphics[width=2.5in]{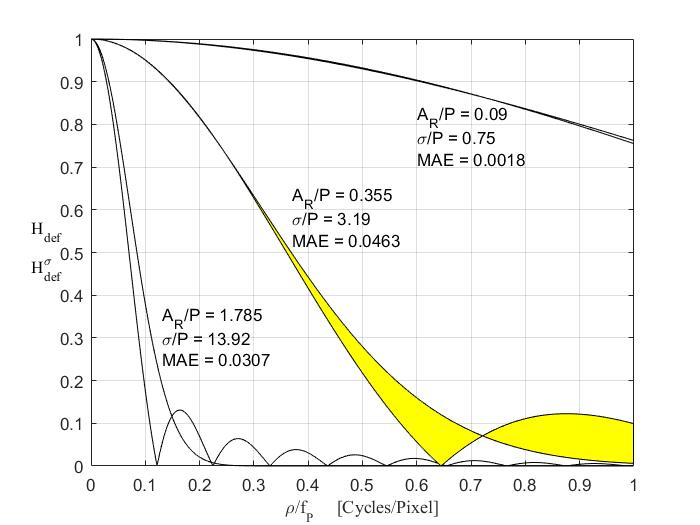}
\caption{Defocusing OTF with $A_R/P$ as  parameter for the practical frequencies limited to one cycle per pixel. The plots are fitted to their approximation Gaussian filters $H_{def}^{\sigma}(\rho)$ with same area under the curve. The Mean Absolute Error $(\!M\!A\!E)$ of fitting is equal to the fraction of the highlighted area between the curves, for the case $A_R/P=0.355$, in the total unit plot area. }
\label{Fig7}
\end{figure}
Considering $P=5.6 \mu m$ as the pixel width and $\rho_M=R/(\lambda_{min} d_i) $ as the maximum value of the coherent cut off frequency, plots of $H_{def}(\rho)$ for various values of $A_R/P$ are shown in Fig.\ref{Fig5} for  the imaging device with $f=15mm, d_f=1m, f_n=f/2R=1.4$.  These plots can be verified by those in Fig.\ref{Fig4} as the later can be considered all with a fixed low value of  $A_{R}/P$ and increasing $\lambda$ from the top to the bottom. The resultant plot can be viewed as  the sum of all plots weighted by $\phi(\lambda)$. For low frequencies, all components contribute to the summation, but $\phi(\lambda)$ emphasizes those with lower $\lambda$ values that have less oscillations. For high frequencies, components beyond the cut off frequencies $2R/(\lambda d_i)$  at low $\lambda$ will be excluded from the summation, leaving only the components with lower coherent cut off frequency at high  $\lambda$ values, which are more oscillatory, to constitute the resultant.

In practical terms, the sensors of the imaging system capture a continuous image of a scene at the pixel cut-off sampling frequency $f_P=1/P$. As a result, the captured image cannot contain frequencies higher than $f_P/2$. This inherent sampling characteristic of the sensors filters out all components with frequencies exceeding half a cycle per pixel in Fig.\ref{Fig5}. Zooming in the low frequency part of the graphs, Fig.\ref{Fig6} illuminates  the plots of defocusing OTF $H_{def}(\rho)$ with respect to the normalised frequency $\rho/f_P$  up to 10 cycles per pixel. Observing a monotone bell shape across all plots over the range of $\rho<f_P$ suggests using a Gaussian model to represent the defocusing OTF in practical applications, especially within the range of  $0<A_R<0.15P$. This range encompasses the near-future conventional cameras that will support super-resolution hardware, effectively doubling the original resolution. To clarify estimation techniques over local images, the range should be mapped to the corresponding range of the variance from the Gaussian model. Defocusing filter in geometric optics is defined as the absolute value of the defocusing OTF, which is already known as Modulation Transfer Function (MTF) in diffraction limited optics.  Fig.\ref{Fig7} illustrates the defocusing filter plots for higher values of the parameter $A_R/P$ compared to those shown in Fig.\ref{Fig6} for $\rho<f_P$. Each plot curve $H_{def}(\rho)$ is fitted to its approximation Gaussian filter $H_{def}^{\sigma}(\rho)=\exp(-\sigma^{2} \rho^{2}/2)$ with the same area under the curve in order to obtain the corresponding $\sigma$. The discrepancy between each curve value and its fitted Gaussian model is quantified by the Mean Absolute Error $(\!M\!A\!E)$ over the plot range. In the case of $A_R/P=0.355$, the measure represents the fraction of the highlighted area in the total  unity plot area. 

The difference between any plot and its Gaussian model is a zero-mean variable over the frequency range $0\leq\rho\leq1$ cycle/pixel. The mean absolute error $(\!M\!A\!E)$ and the analogous well-known measure, the root mean square error ($\!R\!M\!S\!E$), are estimated from the finite samples of this variable. Given that the absolute value of any variable is equal to the square root of its squared, the sole distinction between $\!M\!A\!E$ and $\!R\!M\!S\!E$ is that $\!M\!A\!E$ reverses the order of the mean and square operators in comparison to $\!R\!M\!S\!E$. The mean absolute error ($\!M\!A\!E$) is bounded by the root mean square error ($\!R\!M\!S\!E$) for any finite sequence of numbers, as shown in the Appendix \ref{AppendixDEF}.  The fraction of $\!M\!A\!E$ in $\!R\!M\!S\!E$ is not significantly far less than unity. For example, the values for the zero mean normal and uniform distributions are $\sqrt{2/\pi}$ and $\sqrt{3/4}$, respectively.

\begin{figure}[!t]
\centering
\includegraphics[width=1.7in]{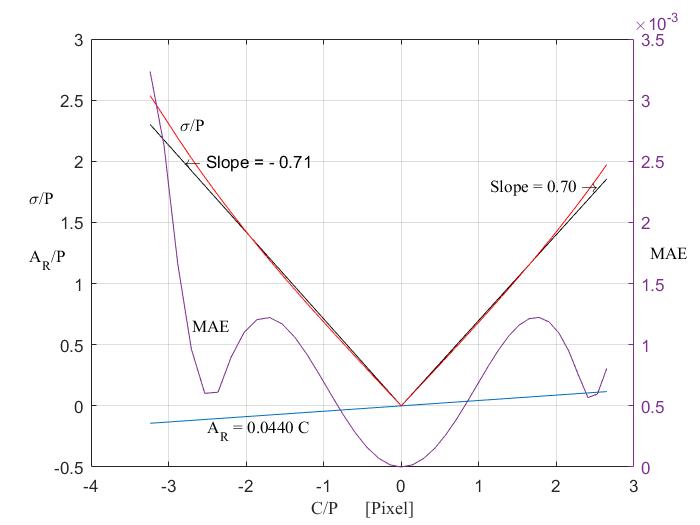}
\includegraphics[width=1.7in]{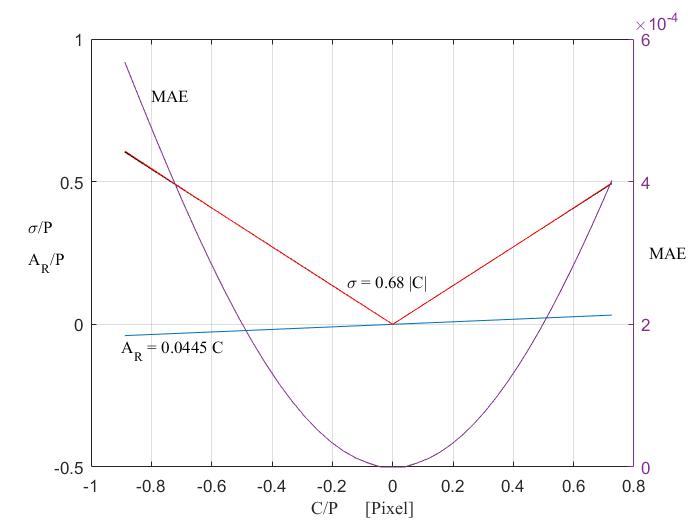}
\caption{Plots of $\sigma, A_R$ and $\!M\!A\!E$ For depth range $-10\%$ to $10\% $ of focal distance by an imaging system with the pixel size $P=5.6 \mu m$ and f-number $f_n=1.4$. Left plot for focal length $f=15 mm$ and focal distance $d_f=1 m$. Right plot for focal length $f=25 mm$ and focal distance $d_f=10 m$.}
\label{Fig8}
\end{figure}

The parameter values in the plots and their models indicate a proportional relationship between $A_R$ and $\sigma$. This relationship, and its connection to the CoC size, is evaluated using an imaging device with typical characteristics: a pixel size of $P=5.6 \mu m$ and an f-number of $f_{n}=f/2R=1.4$, in two scenarios. In one instance, the system with a focal length of $f=15 mm$ is set to the focusing distance of $d_f=1m$. In another instance, the system with a focal length of $f=25mm$ is set to the focusing distance of $d_f=10m$. Based on the known values of  $d_i$ in (\ref{LensLaw}) and $C_o$ in (\ref{Eq4}), the values of $C$ in (\ref{Eq4}) and $A_R$ in (\ref{Eq6}), along with the function $H_{def}(\rho)$ in (\ref{Eq14}), can be determined for a specified depth value $\Delta d_f$  within a certain depth range. For the known $H_{def}(\rho)$, the corresponding values of $\sigma$ and the fitting error $\!M\!A\!E$ for its Gaussian model are available. For the depth range $–d_f/10<\Delta d_f<d_f/10$, Fig.\ref{Fig8} shows the plots of $\sigma, A_R$ and $\!M\!A\!E$ versus the diameter of the blur circle $C$ for both cases. The plots illustrate a linear relationship between $A_R$ and $C$ as described in  (\ref{Eq7}). The graphs of $\sigma$ also demonstrate a satisfactory degree of linearity with $C$, as indicated by the fitted lines. The low $\!M\!A\!E$ values on the right axis of the graphs support the validity of the Gaussian model for defocusing filters. Additionally, the model exhibits remarkable consistency as the focusing distance increases.

\section{General Settings that Confirm Gaussian Model}
\noindent As shown in Fig.\ref{Fig8}, the maximum value of C at the two specified camera settings is a crucial for determining the range of settings that validate the Gaussian model for depth estimation, in terms of mean absolute error $(\!M\!A\!E)$ as the metric. This concept is explored by examining the relationship between the extreme value of $C$ and the settings used. For the relative depth to the focal distance within the range $–\eta<\Delta d_f<\eta$, a straightforward manipulation of (\ref{Eq4}) leads to the relationship expressed in (\ref{Eq18}).
\begin{align}
&C_{\!m\!a\!x} \stackrel{\triangle}{=}  \!m\!a\!x|C| = \label{Eq18} \\
&max \left|\frac{\Delta d_f}{d_f+\Delta d_f}\frac{Af}{d_f - f}\right|=\frac{\eta}{1-\eta} \frac{Af}{d_f - f}\nonumber
 \end{align}
By substituting $f_n=f /A$ into (\ref{Eq18}), the original equation is reformulated into a quadratic function of $f$, as shown in (\ref{Eq19}).
\begin{equation}
\label{Eq19}
f^2+C_m f_n f -C_m f_n d_f = 0, \quad  C_m = \frac{1-\eta}{\eta}C_{\!m\!a\!x}
\end{equation} 
The equation has two distinct real roots with opposite signs. The positive root is given by the expression (\ref{Eq20}). 
\begin{equation}
\label{Eq20}
f = \frac{C_m f_n}{2}\left(\sqrt{1+\frac{4d_f}{C_m f_n}}-1\right)
\end{equation} 

The investigation into the camera settings is confined to the practical discrete ranges of  $f_n\in\{1,1.4,2,2.8,4\}, d_f\in\{1,5,10,20,40.70,100\}  \text{ meters }, C_{\!m\!a\!x}\in\{1,2,3,4,5,6,7\}$ in terms of Pixel width $=P\in\{1,2,4,5.6,8\}  \mu m$ with $\eta=0.1$. In accordance with the triple $(d_f,f_n,C_{\!m\!a\!x})$,  the focal length of the imaging device is determined by $C_m$ in (\ref{Eq19}), and (\ref{Eq20}). The generation of a new triple $(d_f,f_n,f)$  results in the production of three plots for $\sigma$,$A_R$  and $\!M\!A\!E$, over range of variations of  $C$, with the minimum value being $–C_{\!m\!a\!x}$. Two samples from the set of three plots are presented  in Fig.\ref{Fig8} for the following parameter values: $(d_f,f_n,f)=(1m,1.4,15mm)$  and  $(d_f,f_n,f)=(10m,1.4,25mm)$. Each plot reveals the maximum value for $\sigma$ and $\!M\!A\!E$,  designated as $\sigma_{\!m\!a\!x}$ and $\!M\!A\!E_{\!m\!a\!x}$, respectively. The result of total independent quartets $(d_f,f_n,C,P)$ contains $7\times5\times7\times5=1225$ records for the six-element array $(d_f,f_n,C_{\!m\!a\!x},P, \sigma_{\!m\!a\!x},\!M\!A\!E_{\!m\!a\!x})$.

By appropriately discretizing the defocusing filter $H_{def}^{\lambda}(\rho)$, the structure required to store the results can be optimized. $H_{def}^{\lambda}(\rho)$ is a function of two variables, as defined in (\ref{Eq21}).
\begin{equation}
\label{Eq21}
H_{def}^{\lambda}(\rho) = \psi\left(\frac{\rho}{2\rho_o},\frac{A_R}{\lambda}\right) = \psi\left(\frac{\lambda d_i\rho}{A},\frac{AC}{8\lambda d_i}\right)
\end{equation}
The function $\psi$ takes on non-zero values when its first argument is less than unity. The defocusing filter is influenced by the second argument. By setting a uniform discretization with $N$ points, the nonzero values are located at $\rho_k=Ak/(N\lambda d_i )$ for $k=0,1,…,N-1$. Among these, the first $M$ points in the range $\rho<1/P$ are applicable for the defocusing filter. This condition establishes that $\rho_M=1/P =AM/(N\lambda d_i)$,  which in turn results in $A=N\lambda d_i/(MP)$. Consequently, The first and second arguments simplify to $ k/N$ and $NC/(8MP)$, respectively. Among theses parameters for both arguments, only $C$ is related to the focused depth, as described by  (\ref{Eq4}). When the range of $d_f$ at each focused depth $d_f$, which is $(-\eta d_f,\eta d_f)$,  is discretized by $N_d$ equally spaced points, the corresponding depth values are defined by $\Delta d_{ft}=\frac{2t-N_d+1}{N_d-1}d_f$ for $t=0,1,…,N_d-1$. The parameter $C$ at the $t-$th position of the depth, denoted by $C_t$, is related to the selected parameter $C_{\!m\!a\!x}$ through (\ref{Eq4}) and (\ref{Eq18}) by (\ref{Eq22}).
\begin{equation}
\label{Eq22}
C_t= \frac{\Delta d_{ft}  C_{\!m\!a\!x}}{d_f+\Delta d_{ft}}\frac{1-\eta}{\eta}=  \frac{(2t-N_d+1) }{2t}\frac{1-\eta}{\eta}C_{\!m\!a\!x}
\end{equation}
Thus, $C$ and the second argument become independent of the focused depth $d_f$. When $C_{\!m\!a\!x}$ is chosen as an independent parameter, $f$ becomes $d_f$-dependent as given by (\ref{Eq20}), and the second argument, which shapes the defocusing filter, remains independent of $d_f$. Table~\ref{MainTable} in  Appendix \ref{AppendixDEF} presents the results of the total independent triples $(f_n ,C_{\!m\!a\!x},P)$ in $5\times7\times5=175$ records for the five-element array $(f_n,C_{\!m\!a\!x},P, \sigma_{\!m\!a\!x}, \!M\!A\!E_{\!m\!a\!x})$.

The implementation of the following filters highlights the potential of the results for practical applications.
\begin{itemize}
\item{For a specific threshold value $\!M\!A\!E_{th}$, the results filtered by the condition $\!M\!A\!E_{\!m\!a\!x}\leq \!M\!A\!E_{th}$ can be considered to satisfy the Gaussian model for the defocusing operator. As illustrated in Fig.\ref{Fig7}, a threshold value of $\!M\!A\!E_{th}=0.01$ represents an adequate acceptance level, indicating that the majority of the data points align with the Gaussian model. Decreasing the threshold value to achieve a more precise model fit does not necessarily ensure better estimation of the model parameter $\sigma$.}

\item{For a specific range $(\sigma_l,\sigma_u)$, the results filtered by the condition $\sigma_l<\sigma_{\!m\!a\!x}<\sigma_u$ determine the resolution power and the area size of local computing used by the method for extracting $\sigma$ at every image point. The computing area should be heuristically wider than the blur circle to ensure the reliability of estimating $\sigma$. A reasonable balance between the depth information captured within the local computing areas and the resolution of extracting $\sigma$ across the entire image can be achieved by setting $\sigma_l = P$ and $\sigma_u  = 5P$.}

\item{For a given threshold value $P_{th}$ for pixel size $P$, the results filtered by the condition $P\leq P_{th}$ support the popularity of applications.  Setting $P_{th}=5.6 \mu m$ is suitable for accommodating the application range of the majority of well-known conventional and smartphone cameras.}

\item{For a specific threshold value $f_{th}$ for the focal length $f$, the results filtered by the condition $f\leq f_{th}$ define the minimum angle of view of the camera.  Focal lengths exceeding $85 mm$ are commonly employed in telephoto cameras to facilitate the zooming in on and magnification of distant objects. Consequently, setting $f_{th}=100 mm$ encompasses the full range of wide-angle and normal imaging devices, as well as the majority of telephoto devices.}
\end{itemize}
\begin{figure}[!t]
\centering
\includegraphics[width=2.5in]{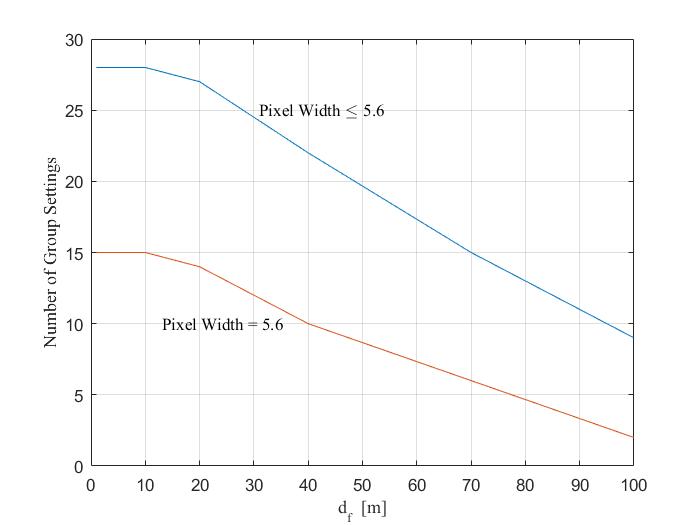}
\caption{Number of group settings  of an imaging device at focusing distances $\{1,5,10,20,40,70,100\} m$ for $P=5.6 \mu m$ and  $P\leq 5.6 \mu m$. Joined plot lines evident decreasing the number of options by increasing the focused distance. The settings and consequences $(f,f_n,\sigma_{\!m\!a\!x}, \!M\!A\!E_{\!m\!a\!x})$  are detailed in Table~\ref{GeneralTable} in the Appendix  \ref{AppendixDEF}.}
\label{Fig9}
\end{figure}

\begin{table*}
\centering 
\caption{The camera settings and resulted $\sigma_{\!m\!a\!x}$ and $\!M\!A\!E_{\!m\!a\!x}$ for depth finding at the focused depth $d_f\in\{1,5,10,20,40.70,100\}$ meters with the depth range $\pm10\%$  of  $d_f$ and pixel width $P=5.6\mu m.$\label{ParticularTable}}
\begin{tabular}{|*{3}{ccccc|}} 
\toprule
$d_{f}[m]$ & $f_n$ & $f[mm]$ & $\sigma_{\!m\!a\!x}[P]$ & $\!M\!A\!E_{\!m\!a\!x}$ & $d_f$ & $f_n$ & $f$ & $\sigma_{\!m\!a\!x}$ & $\!M\!A\!E_{\!m\!a\!x}$  & $d_f$ & $f_n$ & $f$ & $\sigma_{\!m\!a\!x}$ & $\!M\!A\!E_{\!m\!a\!x}$   \\
\midrule 
1 & 1.0 & 07.07 & 1.49 & 0.001 & 05 & 2.8 & 45.80 & 4.33 & 0.006 & 20 & 2.0 & 44.85 & 1.34 & 0.0022   \\ 
1 & 1.0 & 09.99 & 3.13 & 0.007 & 05 & 4.0 & 31.65 & 1.11 & 0.004 & 20 & 2.0 & 63.40 & 3.03 & 0.0029   \\ 
1 & 1.0 & 12.22 & 4.49 & 0.010 & 05 & 4.0 & 44.70 & 2.48 & 0.006 & 20 & 2.0 & 77.62 & 4.40 & 0.0084   \\ 
1 & 1.4 & 08.36 & 1.43 & 0.001 & 05 & 4.0 & 54.69 & 4.21 & 0.003 & 20 & 2.8 & 53.06 & 1.24 & 0.0031   \\ 
1 & 1.4 & 11.81 & 3.10 & 0.005 & 10 & 1.0 & 22.42 & 1.49 & 0.001 & 20 & 2.8 & 74.99 & 2.85 & 0.0030   \\ 
1 & 1.4 & 14.44 & 4.46 & 0.009 & 10 & 1.0 & 31.70 & 3.13 & 0.007 & 20 & 2.8 & 91.81 & 4.33 & 0.0065   \\ 
1 & 2.0 & 09.99 & 1.34 & 0.002 & 10 & 1.0 & 38.81 & 4.49 & 0.010 & 20 & 4.0 & 63.40 & 1.11 & 0.0038   \\ 
1 & 2.0 & 14.10 & 3.03 & 0.003 & 10 & 1.4 & 26.53 & 1.43 & 0.001 & 20 & 4.0 & 89.60 & 2.49 & 0.0056   \\ 
1 & 2.0 & 17.24 & 4.40 & 0.008 & 10 & 1.4 & 37.50 & 3.10 & 0.005 & 40 & 1.0 & 44.87 & 1.49 & 0.0006   \\ 
1 & 2.8 & 11.81 & 1.23 & 0.003 & 10 & 1.4 & 45.90 & 4.46 & 0.009 & 40 & 1.0 & 63.45 & 3.13 & 0.0069   \\ 
1 & 2.8 & 16.66 & 2.84 & 0.003 & 10 & 2.0 & 31.70 & 1.34 & 0.002 & 40 & 1.0 & 77.69 & 4.49 & 0.0097   \\ 
1 & 2.8 & 20.37 & 4.33 & 0.006 & 10 & 2.0 & 44.80 & 3.03 & 0.003 & 40 & 1.4 & 53.09 & 1.43 & 0.0012   \\ 
1 & 4.0 & 14.10 & 1.11 & 0.004 & 10 & 2.0 & 54.84 & 4.40 & 0.008 & 40 & 1.4 & 75.06 & 3.10 & 0.0052   \\ 
1 & 4.0 & 19.88 & 2.47 & 0.006 & 10 & 2.8 & 37.50 & 1.24 & 0.003 & 40 & 1.4 & 91.91 & 4.46 & 0.0095   \\ 
1 & 4.0 & 24.29 & 4.20 & 0.003 & 10 & 2.8 & 52.99 & 2.85 & 0.003 & 40 & 2.0 & 63.45 & 1.34 & 0.0022   \\ 
5 & 1.0 & 15.85 & 1.49 & 0.001 & 10 & 2.8 & 64.85 & 4.33 & 0.006 & 40 & 2.0 & 89.70 & 3.03 & 0.0029   \\ 
5 & 1.0 & 22.40 & 3.13 & 0.007 & 10 & 4.0 & 44.80 & 1.11 & 0.004 & 40 & 2.8 & 75.06 & 1.24 & 0.0031   \\ 
5 & 1.0 & 27.42 & 4.49 & 0.010 & 10 & 4.0 & 63.30 & 2.49 & 0.006 & 40 & 4.0 & 89.70 & 1.11 & 0.0038   \\ 
5 & 1.4 & 18.75 & 1.43 & 0.001 & 10 & 4.0 & 77.47 & 4.21 & 0.003 & 70 & 1.0 & 59.37 & 1.49 & 0.0006   \\ 
5 & 1.4 & 26.49 & 3.10 & 0.005 & 20 & 1.0 & 31.72 & 1.49 & 0.001 & 70 & 1.0 & 83.95 & 3.13 & 0.0069   \\ 
5 & 1.4 & 32.43 & 4.46 & 0.009 & 20 & 1.0 & 44.85 & 3.13 & 0.007 & 70 & 1.4 & 70.24 & 1.43 & 0.0012   \\ 
5 & 2.0 & 22.40 & 1.34 & 0.002 & 20 & 1.0 & 54.92 & 4.49 & 0.010 & 70 & 1.4 & 99.32 & 3.10 & 0.0052   \\ 
5 & 2.0 & 31.65 & 3.03 & 0.003 & 20 & 1.4 & 37.53 & 1.43 & 0.001 & 70 & 2.0 & 83.95 & 1.34 & 0.0022   \\ 
5 & 2.0 & 38.73 & 4.40 & 0.008 & 20 & 1.4 & 53.06 & 3.10 & 0.005 & 70 & 2.8 & 99.32 & 1.24 & 0.0031   \\ 
5 & 2.8 & 26.49 & 1.24 & 0.003 & 20 & 1.4 & 64.96 & 4.46 & 0.009 & 100 & 1.0 & 70.97 & 1.49 & 0.0006   \\ 
5 & 2.8 & 37.43 & 2.84 & 0.003 & 20 & 2.0 & 44.85 & 1.34 & 0.002 & 100 & 1.4 & 83.96 & 1.43 & 0.0012   \\ \hline
\end{tabular}
\end{table*}
Table~\ref{GeneralTable} in  Appendix  \ref{AppendixDEF} presents the results of the investigation into the practical applications of an imaging device for depth finding, filtered to exclude irrelevant data. The analysis identified $157$ records of group settings that align well with the Gaussian model for the defocusing operator. This group includes settings for both conventional and smartphone cameras. Table~\ref{ParticularTable} provides the detailed account of $77$ records of group settings that are specific to conventional cameras with a pixel size of $P=5.6 \mu m $ and focusing distances $d_f$ within the set ${1,5,10,20,40,70,100}$.  Fig.\ref{Fig9} illustrates the relationship between the number of group settings and focusing distance in cases where $P\leq5.6 \mu m$ and $P=5.6 \mu m$.

\section{Discussion and Conclusions}
\noindent This paper supported geometric optics by  diffraction limited theory to investigate the subject   “Defocus Aberration Theory Confirms Gaussian Model for Defocus Operator in Most Imaging Devices”. The theory made a linear relationship between the size of the CoC in geometrical optics and the maximum phase shift $A_R$ for the wave front across the pupil caused by defocus. This link opened an investigation channel into defocused imaging systems for characterizing their Monochrome OTF. By modeling the ambient illumination as black body radiation, it became possible to calculate the OTF under natural lighting conditions. The practical frequency band of the magnitude of the OTF was fitted to the Gaussian model over $1225$ records from conventional settings and applications within the following ranges: focused distance from $1$ to $100 m$, depth range from $-10\%$ to $+10\%$  of the focused distance,  focal number from $1$ to $4$, pixel width ($P$) from $1$ to  $8 \mu m$,  maximum blur circle diameter of $1$ to $7$ pixels, and image intensity frequency under $1$ cycle per pixel.This range accommodates the near by future conventional cameras that will support super-resolution hardware with double the original resolution.
\begin{table}[!ht]
    \centering
    \caption{The statistics of results in the Table~\ref{GeneralTable} in Appendix  \ref{AppendixDEF} for number of options and the extremes of pixel width, focal length and f-number at each focused depth. \label{GeneralTableStatistics}}
    \begin{tabular}{ccccc}
        $d_f [m]$  & No. of     & $P [\mu m]$ &          $f [mm] $&   $f_n$ \\
     Value& Options & Min \quad Mux& Min  \quad Mux & Min \quad Mux  \\ \toprule
        1    & 28 & 2 \quad 5.6  & 4.23  \quad     24.29 & 1 \quad       4.0 \\ 
        5    & 28 & 2 \quad 5.6  & 9.48  \quad     54.69 & 1 \quad       4.0 \\ 
        10  & 28 & 2  \quad 5.6 & 13.41 \quad    77.47 & 1 \quad       4.0 \\ 
        20  & 27 & 2 \quad  5.6 & 18.97  \quad   91.81 & 1 \quad       4.0 \\ 
        40  & 22 & 2 \quad 5.6  & 26.82  \quad  92.84  & 1 \quad       4.0\\ 
        70  & 15 & 2 \quad 5.6  & 35.49  \quad   99.32 & 1 \quad       2.8 \\ 
        100 & 9  & 2 \quad 5.6  & 42.42  \quad   86.91 & 1 \quad       2.0 \\ 
    \end{tabular}
\end{table}
Each record was also characterised by the resulting focal length $f$, the maximum mean absolute error of fitting $\!M\!A\!E_{\!m\!a\!x}$, and the maximum estimated parameter for the model $\sigma_{\!m\!a\!x}$. Applying the filters $\!M\!A\!E_{\!m\!a\!x}\leq 0.01,  1\leq P \leq 5.6 \mu m , f\leq 100 mm$ and $1\leq \sigma_{\!m\!a\!x}\leq 5$ identified $157$ records of the group settings that align well with the Gaussian model for the defocusing operator and support local computations to extract the model parameter with acceptable resolution. The statistics of records for each focused depth are given in the Table~\ref{GeneralTableStatistics}. It was demonstrated that the number of options increases as the focused depth decreases. Re-filtering the results to include only those with the pixel width of $5.6 \mu m$ yielded $77$ records within the group settings. This refined dataset supports a wide range of applications in the literature for enhancing the efficiency of 3-D recovery. The supporting range for conventional camera settings without super-resolution power, either results in a $\!M\!A\!E_{\!m\!a\!x}$ significantly less than $1\%$, or increases with the number of records while maintaining the same error rate. This enhances the reliability of the Gaussian model for the defocusing operator and its standard deviation as a depth measure in conventional imaging systems.

The Gaussian model is almost perfect for defocusing operators in conventional imaging devices, up to sampling by the  image sensors grid. Despite the potential for aliasing effects in captured images, scene images adhering to the Nyquist rate ($fp/2$) remain within the confidential range of applications.   
The model was studied under diffraction-limited conditions with defocus aberration. All lens aberrations, including those associated with monochromatic light (spherical aberration, coma, astigmatism, field curvature, image distortion) and chromatic aberrations (dispersion), were not considered in the research. Conventional cameras benefit from the increasing demand by microscopy and telescopy imaging systems for aberration correction techniques to improve image quality. This places conventional cameras within an acceptable range of lens aberrations for image processing. 
The research is ongoing on the computational structure for estimating the standard deviation of the defocusing Gaussian filter at an image point from a sharper version of the image. The high confidence in the Gaussian filter makes it an attractive option for software and hardware implementation, aimed at accelerating extensive image processing applications towards real-time capabilities. The next phase of the research will focus on outlining the fabrication module for this realization.

\begin{IEEEbiography}[{\includegraphics[width=1in,height=1.25in,clip,keepaspectratio]{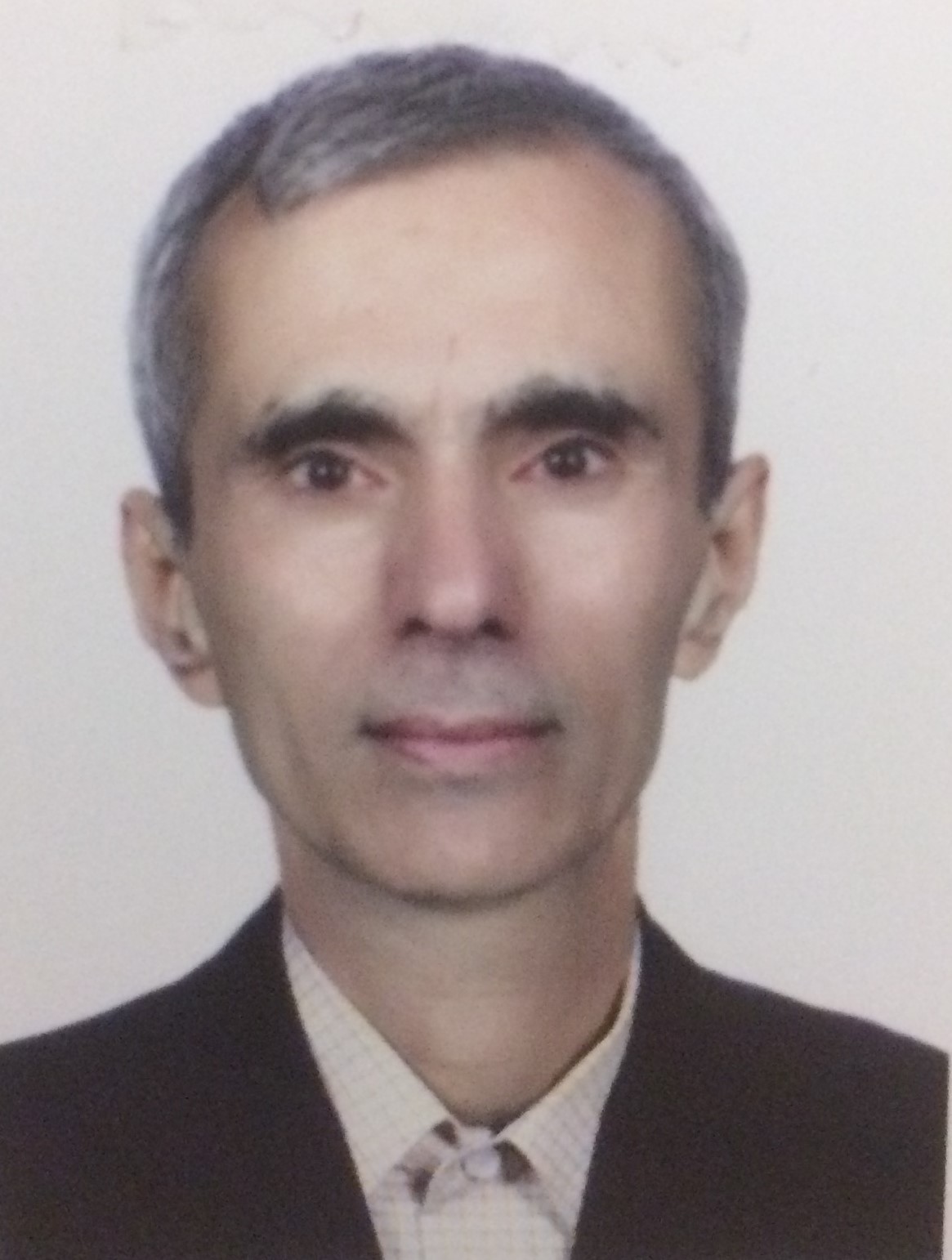}}]
Akbar Saadat  received the B.S. and M.S. degrees from Esfahan University of Technology, in 1986 and 1991, and the Ph.D. degree from Sharif University of Technology, in 1997, all in electrical engineering, Iran. He has been faculty member of Electrical Engineering Department in Esfahan University of Technology and Yazd University. Since 1998, he has been with the R\&D, Signalling, and Infrastructure Technical Departments at Iranian Railways. He is an expert in railway signalling and has completed several international training courses in Germany, India, and Japan. His research interests include computer vision, image analyzis, and information processing..
\end{IEEEbiography}

\appendices
\section{ justifying the equation (\ref{Eq10})}
\label{AppendixEq10}
The expression for $H_{def}^{o} (\rho)$ is derived from $H_{def} (f_x,f_y )$ as follows.
\begin{equation}
\label{EqA1}
H_{def}^{o} (\rho) = H_{def} (\sqrt{f_x^2,f_y^2} )|_{f_x=\rho,f_y=0}=H_{def}(\rho,0)
\end{equation}
The auxiliary function $\psi (P(x,y),\lambda d_i \rho)$ is defined as (\ref{EqA2}) to simplify the integrands in  (\ref{Eq10}). Taking into account $P_{def} (x,y)=P(x,y)exp(j\kappa W(x,y))$  and  $W(x,y) = \frac{A_R}{R^2}(x^2+y^2)$ yields.
\begin{align}
&\Psi(P(x,y),\lambda d_i \rho)  \stackrel{\triangle}{=} \label{EqA2} \\
&P_{def}(x+\frac{\lambda d_i \rho}{2},y)  P_{def}(x-\frac{\lambda d_i \rho}{2},y) = \nonumber \\
&P(x+\frac{\lambda d_i \rho}{2},y)  P(x-\frac{\lambda d_i \rho}{2},y) \exp(j8\pi\frac{A_R}{\lambda}\frac{x\lambda d_i\rho}{2R^2})  \nonumber
\end{align}
$H_{def}^{o} (\rho)$ is obtained by (\ref{EqA3}) in using $\Psi$ as an integrand.
\begin{equation}
H_{def}^{o} (\rho)  =H_{def}(\rho,0)  \label{EqA3}=\frac{\iint\limits_{-\infty}^{+\infty}\Psi(P(x,y),\lambda d_i \rho)dxdy}{\iint\limits_{-\infty}^{+\infty}\Psi(P(x,y),0)dxdy} \\
\end{equation}
For the circular aperture with the radius R the denominator is equal to the aperture area $\pi R^2$. The integrand in the nominator has the exponent part value in the overlap area of the circles centered at $(x,y)= (-\frac{\lambda d_i \rho}{2},0)$ and $(\frac{\lambda d_i \rho}{2},0)$. The overlap area is formed when the distance between the center points  is less than the diameter of the circles that is  $\lambda d_i \rho<2R$ or $ \rho<\frac{2R}{\lambda d_i}=2\rho_o$. This area is characterised by the range $(|x|\leq x_m, |y|\leq y_m(x))$  where $x_m=R-\frac{\lambda d_i \rho}{2}$ and $y_m(x)=\sqrt{R^2-(|x|+\frac{\lambda d_i \rho}{2})^2}$. Therefore, $H_{def}^{o} (\rho)$ is obtained as  (\ref{EqA4}). 
\begin{align}
H_{def}^{o}&=\frac{1}{\pi R^2} \iint\limits_{-\infty}^{+\infty}\Psi(P(x,y),\lambda d_i \rho)dxdy   \label{EqA4} \\
&=\frac{1}{\pi R^2}\int_{-x_m}^{x_m}2y_m(x)\exp(j8\pi\frac{A_R}{\lambda}\frac{x\lambda d_i\rho}{2R^2})dx  \nonumber \\
&=\frac{1}{\pi R^2}\int_{0}^{x_m}4y_m(x)\cos(8\pi\frac{A_R}{\lambda}\frac{x\lambda d_i\rho}{2R^2})dx  \nonumber 
\end{align}
Replacing $ \rho$ with $2\rho_o\cos(\theta)$ and normalising integration variable to $R$, shapes the result the same as  (\ref{Eq10}).

\section{The  unique $k(\theta)$ }
\label{AppendixEq11}
The original function in  (\ref{EqB1}) describes the unit circle centered at $(-\cos(\theta),0)$ in the first quadrant of the $xy$ plane.  
\begin{equation}
 \label{EqB1} 
y = \sqrt{1-(x+\cos(\theta))^2}  \approx\sin(\theta)(1-(\frac{x}{1-\cos(\theta)})^{k})
\end{equation}
All approximations match with the exact values of the original function at  $x=0$ and $x=1-\cos(\theta)$. The value of original function at the mid point on that circle is $y=\sin(\theta/2)$ at $x=\cos(\theta/2)-\cos(\theta)$.  The value of $k(\theta)$ is chosen by  (\ref{EqB2}) to force the approximation to be satisfied by the mid point.
\begin{equation}
 \label{EqB2} 
 \sin(\frac{\theta}{2}) = \sin(\theta)(1-(\frac{\cos(\theta/2)-\cos(\theta)}{1-\cos(\theta)})^{k}) 
\end{equation}
The solution of  (\ref{EqB2}) for  $k(\theta)$ and the mean value of that over the  range $0<\theta<\pi/2$ is given by  (\ref{EqB3}).
\begin{align}
 \label{EqB3} 
 &k(\theta) = \frac{\log\left[\frac{\sin(\theta)- \sin(\theta/2)}{\sin(\theta)}\right]}{\log\left[\frac{\cos(\theta/2)-\cos(\theta)}{1-\cos(\theta)}\right]} \\ 
&\bar{k} = \frac{2}{\pi}\int_{0}^{\frac{\pi}{2}}k(\theta)d\theta = 2.70428. \nonumber
\end{align}

\section{$ \!M\!A\!E$ is bounded by $\!R\!M\!S\!E$}
\label{AppendixDEF}
Considering the mean of a random variable $X$  as $\overline{X}$ and its error as $X- \overline{X}$. When $\overline{X}=0$ the Root Mean Square Error ($\!R\!M\!S\!E$) and the Mean Absolute Error ($ \!M\!A\!E$) of  $X$ are simplified as  (\ref{EqC1}).
\begin{align}
 \label{EqC1} 
&\!R\!M\!S\!E(X) = \sqrt{\overline{(X- \overline{X})^2}}=  \sqrt{\overline{X^2}}\\
& \!M\!A\!E(X) = \overline{|X-\overline{X}|}= \overline{\sqrt{(X-\overline{X})^2}}= \overline{\sqrt{X^2}} \nonumber
\end{align}
For any n outcome of the positive random variable $X^2$ such as $ x_1,x_2,x_3,\dots, x_n$, an estimation of  $\!R\!M\!S\!E(X)$ and $ \!M\!A\!E(X)$ are given by $R_n $ and $M_n$ by  (\ref{EqC2}).
\begin{align}
 \label{EqC2} 
&R_n = \sqrt{\frac{x_1+x_2+x_3+\dots + x_n}{n}}\\
&M_n = \frac{\sqrt{x_1}+\sqrt{x_2}+\sqrt{x_3}+\dots+\sqrt{x_n}}{n}\nonumber
\end{align}
The following theorem states that $ \!M\!A\!E$ is bounded by $\!R\!M\!S\!E$.
\begin{theorem}
for any positive integer $n$ and any set of positive numbers $\{ x_1,x_2,x_3,\dots, x_n \}:\quad M_n\leq R_n$
\end{theorem}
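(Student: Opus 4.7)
\medskip

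\noindent\textbf{Proof plan.} The plan is to recast the inequality $M_n\le R_n$ as the classical arithmetic mean--quadratic mean inequality for the square roots of the data, and then derive it from the Cauchy--Schwarz inequality. First I would introduce the substitution $y_i=\sqrt{x_i}$ (well-defined since each $x_i>0$), so that the claim $M_n\le R_n$ becomes
\begin{equation*}
\frac{y_1+y_2+\cdots+y_n}{n}\;\le\;\sqrt{\frac{y_1^2+y_2^2+\cdots+y_n^2}{n}}.
\end{equation*}
Because both sides are non-negative, squaring is an equivalence-preserving step, and the inequality reduces to
\begin{equation*}
\Bigl(\sum_{i=1}^{n}y_i\Bigr)^{2}\;\le\;n\sum_{i=1}^{n}y_i^{2}.
\end{equation*}

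Next I would recognize this as an instance of the Cauchy--Schwarz inequality applied to the vectors $(y_1,\dots,y_n)$ and $(1,1,\dots,1)$ in $\mathbb{R}^{n}$:
\begin{equation*}
\Bigl(\sum_{i=1}^{n} y_i\cdot 1\Bigr)^{2}\;\le\;\Bigl(\sum_{i=1}^{n}y_i^{2}\Bigr)\Bigl(\sum_{i=1}^{n}1^{2}\Bigr) = n\sum_{i=1}^{n}y_i^{2},
\end{equation*}
which yields the required bound. Taking square roots and dividing by $n$ restores the form $M_n\le R_n$, completing the argument. As an alternative, I could invoke Jensen's inequality for the concave function $t\mapsto\sqrt{t}$ on the uniform average of $x_1,\dots,x_n$; or I could give a self-contained proof by expanding $\sum_{i<j}(y_i-y_j)^{2}\ge 0$, which rearranges directly to $n\sum y_i^{2}\ge(\sum y_i)^{2}$. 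Any of these three routes works; Cauchy--Schwarz is the shortest.

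There is no real obstacle here: the statement is the standard AM--QM inequality in disguise, and all three proof routes are one-line deductions once the substitution $y_i=\sqrt{x_i}$ is made. The only care needed is to verify the equality condition (all $y_i$ equal, i.e.\ all $x_i$ equal) if the paper later wants to discuss when the bound is tight, and to note that the positivity assumption on the $x_i$ is used only to ensure that the square roots are real; the inequality extends by continuity to the non-negative case as well.
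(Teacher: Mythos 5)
Your proof is correct, but it takes a genuinely different route from the paper's. You substitute $y_i=\sqrt{x_i}$, recognize $M_n\le R_n$ as the AM--QM inequality for the $y_i$, and settle it with Cauchy--Schwarz applied to $(y_1,\dots,y_n)$ and $(1,\dots,1)$; the reduction to $\bigl(\sum_i y_i\bigr)^2\le n\sum_i y_i^2$ is exactly right, and your alternative routes (Jensen for $t\mapsto\sqrt t$, or expanding $\sum_{i<j}(y_i-y_j)^2\ge 0$) are equally valid. The paper instead proceeds by induction on $n$: the base case $M_1=R_1$ is immediate, and the inductive step writes $M_{k+1}-R_{k+1}=\frac{M_{k+1}^2-R_{k+1}^2}{M_{k+1}+R_{k+1}}$, substitutes the update formulas $M_{k+1}=\frac{kM_k+\sqrt{x_{k+1}}}{k+1}$ and $R_{k+1}^2=\frac{kR_k^2+x_{k+1}}{k+1}$, and rearranges the numerator into $-k\bigl((M_k-\sqrt{x_{k+1}})^2+(k+1)(R_k^2-M_k^2)\bigr)$, which is nonpositive by the induction hypothesis. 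Your argument is shorter and identifies the statement as a standard inequality, which makes its provenance and equality case (all $x_i$ equal) transparent; the paper's induction is more computational but entirely self-contained, requiring no external inequality, and its incremental form makes explicit how the gap $R_n^2-M_n^2$ evolves as samples are added. Both are complete proofs; neither has a gap.
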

\begin{proof}
Using the mathematical induction,  since $M_1= \frac{\sqrt{x_1}}{1}=\sqrt{\frac{x_1}{1}} =R_1$, the statement is true for $n=1$.
Assume $M_k\leq R_k$ for some positive integer $n=k$. The statement for  $n=k+1$ is also true as demonstrated by (\ref{EqC3}).
\begin{align}
 \label{EqC3} 
&M_{k+1}-R_{k+1} =\frac{M_{k+1}^2-R_{k+1}^2}{M_{k+1}+R_{k+1}}\\
&=\frac{(\frac{kM_k+\sqrt{x_{k+1}}}{k+1})^2-\frac{kR_k^2+x_{k+1}}{k+1}}{M_{k+1}+R_{k+1}} \nonumber \\ 
&=\frac{-k \left((M_k-\sqrt{x_{k+1}})^2+(k+1)(R_k^2-M_k^2)\right)}{(k+1)^2(M_{k+1}+(R_{k+1})}\leq 0 \nonumber
\end{align}   
\end{proof}

\begin{table*}
\centering 
\caption{The camera settings and resulted $\sigma_{\!m\!a\!x}$ and $\!M\!A\!E_{\!m\!a\!x}$ for depth finding at the focused depth $d_f\in\{1,5,10,20,40.70,100\}$ meters with the depth range $\pm10\%$  of  $d_f$. The focal length $f$ for each record and  separately for each focused depth $d_f$ is given by (\ref{Eq20}). \label{MainTable}}
\begin{tabular}{|*{3}{ccccc|}}     
\toprule
$f_n$ & $C_{\!m\!a\!x}[P]$ & $P[\mu m]$  & $\sigma_{\!m\!a\!x}[P]$ & $\!M\!A\!E_{\!m\!a\!x}$ & $f_n$ & $C_{\!m\!a\!x}$ & $P$  & $\sigma_{\!m\!a\!x}$ & $\!M\!A\!E_{\!m\!a\!x}$ & $f_n$ & $C_{\!m\!a\!x}$ & $P$  & $\sigma_{\!m\!a\!x}$ & $\!M\!A\!E_{\!m\!a\!x}$  \\
\midrule 
1.0 & 1 & 1.0 & 0.99 & 0.0232 & 1.4 & 5 & 5.6 & 7.13 & 0.0094 & 2.8 & 3 & 2.0 & 3.03 & 0.0340   \\ 
1.0 & 1 & 2.0 & 1.24 & 0.0086 & 1.4 & 5 & 8.0 & 7.16 & 0.0066 & 2.8 & 3 & 4.0 & 4.22 & 0.0041   \\ 
1.0 & 1 & 4.0 & 1.43 & 0.0017 & 1.4 & 6 & 1.0 & 7.83 & 0.0301 & 2.8 & 3 & 5.6 & 4.33 & 0.0064   \\ 
1.0 & 1 & 5.6 & 1.49 & 0.0006 & 1.4 & 6 & 2.0 & 8.41 & 0.0199 & 2.8 & 3 & 8.0 & 4.40 & 0.0059   \\ 
1.0 & 1 & 8.0 & 1.55 & 0.0004 & 1.4 & 6 & 4.0 & 8.47 & 0.0115 & 2.8 & 4 & 1.0 & 2.69 & 0.1248   \\ 
1.0 & 2 & 1.0 & 2.15 & 0.0466 & 1.4 & 6 & 5.6 & 8.46 & 0.0087 & 2.8 & 4 & 2.0 & 4.68 & 0.0292   \\ 
1.0 & 2 & 2.0 & 2.85 & 0.0086 & 1.4 & 6 & 8.0 & 8.47 & 0.0063 & 2.8 & 4 & 4.0 & 5.64 & 0.0084   \\ 
1.0 & 2 & 4.0 & 3.10 & 0.0073 & 1.4 & 7 & 1.0 & 9.18 & 0.0315 & 2.8 & 4 & 5.6 & 5.66 & 0.0082   \\ 
1.0 & 2 & 5.6 & 3.13 & 0.0069 & 1.4 & 7 & 2.0 & 9.79 & 0.0185 & 2.8 & 4 & 8.0 & 5.73 & 0.0063   \\ 
1.0 & 2 & 8.0 & 3.15 & 0.0056 & 1.4 & 7 & 4.0 & 9.83 & 0.0105 & 2.8 & 5 & 1.0 & 3.67 & 0.1255   \\ 
1.0 & 3 & 1.0 & 3.77 & 0.0420 & 1.4 & 7 & 5.6 & 9.81 & 0.0079 & 2.8 & 5 & 2.0 & 6.43 & 0.0170   \\ 
1.0 & 3 & 2.0 & 4.33 & 0.0182 & 1.4 & 7 & 8.0 & 9.80 & 0.0058 & 2.8 & 5 & 4.0 & 6.94 & 0.0099   \\ 
1.0 & 3 & 4.0 & 4.46 & 0.0132 & 2.0 & 1 & 1.0 & 0.72 & 0.0216 & 2.8 & 5 & 5.6 & 7.08 & 0.0082   \\ 
1.0 & 3 & 5.6 & 4.49 & 0.0097 & 2.0 & 1 & 2.0 & 0.99 & 0.0116 & 2.8 & 5 & 8.0 & 7.11 & 0.0063   \\ 
1.0 & 3 & 8.0 & 4.51 & 0.0070 & 2.0 & 1 & 4.0 & 1.23 & 0.0043 & 2.8 & 6 & 1.0 & 4.88 & 0.1228   \\ 
1.0 & 4 & 1.0 & 5.49 & 0.0160 & 2.0 & 1 & 5.6 & 1.34 & 0.0022 & 2.8 & 6 & 2.0 & 7.81 & 0.0150   \\ 
1.0 & 4 & 2.0 & 5.66 & 0.0231 & 2.0 & 1 & 8.0 & 1.43 & 0.0009 & 2.8 & 6 & 4.0 & 8.40 & 0.0099   \\ 
1.0 & 4 & 4.0 & 5.78 & 0.0138 & 2.0 & 2 & 1.0 & 1.49 & 0.0662 & 2.8 & 6 & 5.6 & 8.47 & 0.0077   \\ 
1.0 & 4 & 5.6 & 5.82 & 0.0102 & 2.0 & 2 & 2.0 & 2.14 & 0.0234 & 2.8 & 6 & 8.0 & 8.47 & 0.0057   \\ 
1.0 & 4 & 8.0 & 5.85 & 0.0073 & 2.0 & 2 & 4.0 & 2.84 & 0.0043 & 2.8 & 7 & 1.0 & 6.24 & 0.1101   \\ 
1.0 & 5 & 1.0 & 6.82 & 0.0313 & 2.0 & 2 & 5.6 & 3.03 & 0.0029 & 2.8 & 7 & 2.0 & 9.17 & 0.0158   \\ 
1.0 & 5 & 2.0 & 7.09 & 0.0232 & 2.0 & 2 & 8.0 & 3.10 & 0.0036 & 2.8 & 7 & 4.0 & 9.79 & 0.0093   \\ 
1.0 & 5 & 4.0 & 7.13 & 0.0132 & 2.0 & 3 & 1.0 & 2.39 & 0.0942 & 2.8 & 7 & 5.6 & 9.82 & 0.0069   \\ 
1.0 & 5 & 5.6 & 7.15 & 0.0095 & 2.0 & 3 & 2.0 & 3.76 & 0.0212 & 2.8 & 7 & 8.0 & 9.83 & 0.0052   \\ 
1.0 & 5 & 8.0 & 7.17 & 0.0067 & 2.0 & 3 & 4.0 & 4.33 & 0.0090 & 4.0 & 1 & 1.0 & 0.47 & 0.0139   \\ 
1.0 & 6 & 1.0 & 8.17 & 0.0336 & 2.0 & 3 & 5.6 & 4.40 & 0.0084 & 4.0 & 1 & 2.0 & 0.72 & 0.0108   \\ 
1.0 & 6 & 2.0 & 8.47 & 0.0215 & 2.0 & 3 & 8.0 & 4.46 & 0.0066 & 4.0 & 1 & 4.0 & 0.99 & 0.0058   \\ 
1.0 & 6 & 4.0 & 8.46 & 0.0122 & 2.0 & 4 & 1.0 & 3.55 & 0.0951 & 4.0 & 1 & 5.6 & 1.11 & 0.0038   \\ 
1.0 & 6 & 5.6 & 8.47 & 0.0090 & 2.0 & 4 & 2.0 & 5.48 & 0.0078 & 4.0 & 1 & 8.0 & 1.23 & 0.0022   \\ 
1.0 & 6 & 8.0 & 8.48 & 0.0063 & 2.0 & 4 & 4.0 & 5.66 & 0.0115 & 4.0 & 2 & 1.0 & 0.96 & 0.0500   \\ 
1.0 & 7 & 1.0 & 9.55 & 0.0336 & 2.0 & 4 & 5.6 & 5.72 & 0.0090 & 4.0 & 2 & 2.0 & 1.48 & 0.0331   \\ 
1.0 & 7 & 2.0 & 9.82 & 0.0194 & 2.0 & 4 & 8.0 & 5.78 & 0.0069 & 4.0 & 2 & 4.0 & 2.13 & 0.0118   \\ 
1.0 & 7 & 4.0 & 9.81 & 0.0111 & 2.0 & 5 & 1.0 & 5.02 & 0.0877 & 4.0 & 2 & 5.6 & 2.47 & 0.0057   \\ 
1.0 & 7 & 5.6 & 9.80 & 0.0082 & 2.0 & 5 & 2.0 & 6.81 & 0.0156 & 4.0 & 2 & 8.0 & 2.83 & 0.0022   \\ 
1.0 & 7 & 8.0 & 9.80 & 0.0059 & 2.0 & 5 & 4.0 & 7.08 & 0.0115 & 4.0 & 3 & 1.0 & 1.46 & 0.0947   \\ 
1.4 & 1 & 1.0 & 0.86 & 0.0235 & 2.0 & 5 & 5.6 & 7.11 & 0.0090 & 4.0 & 3 & 2.0 & 2.38 & 0.0473   \\ 
1.4 & 1 & 2.0 & 1.12 & 0.0104 & 2.0 & 5 & 8.0 & 7.13 & 0.0066 & 4.0 & 3 & 4.0 & 3.73 & 0.0108   \\ 
1.4 & 1 & 4.0 & 1.34 & 0.0030 & 2.0 & 6 & 1.0 & 6.65 & 0.0688 & 4.0 & 3 & 5.6 & 4.20 & 0.0026   \\ 
1.4 & 1 & 5.6 & 1.43 & 0.0012 & 2.0 & 6 & 2.0 & 8.16 & 0.0167 & 4.0 & 3 & 8.0 & 4.33 & 0.0044   \\ 
1.4 & 1 & 8.0 & 1.50 & 0.0004 & 2.0 & 6 & 4.0 & 8.47 & 0.0107 & 4.0 & 4 & 1.0 & 2.01 & 0.1331   \\ 
1.4 & 2 & 1.0 & 1.82 & 0.0613 & 2.0 & 6 & 5.6 & 8.47 & 0.0082 & 4.0 & 4 & 2.0 & 3.53 & 0.0480   \\ 
1.4 & 2 & 2.0 & 2.51 & 0.0154 & 2.0 & 6 & 8.0 & 8.46 & 0.0061 & 4.0 & 4 & 4.0 & 5.47 & 0.0041   \\ 
1.4 & 2 & 4.0 & 3.04 & 0.0042 & 2.0 & 7 & 1.0 & 8.29 & 0.0523 & 4.0 & 4 & 5.6 & 5.63 & 0.0058   \\ 
1.4 & 2 & 5.6 & 3.10 & 0.0052 & 2.0 & 7 & 2.0 & 9.55 & 0.0167 & 4.0 & 4 & 8.0 & 5.66 & 0.0057   \\ 
1.4 & 2 & 8.0 & 3.13 & 0.0049 & 2.0 & 7 & 4.0 & 9.82 & 0.0096 & 4.0 & 5 & 1.0 & 2.64 & 0.1548   \\ 
1.4 & 3 & 1.0 & 3.05 & 0.0675 & 2.0 & 7 & 5.6 & 9.83 & 0.0075 & 4.0 & 5 & 2.0 & 4.98 & 0.0445   \\ 
1.4 & 3 & 2.0 & 4.22 & 0.0084 & 2.0 & 7 & 8.0 & 9.81 & 0.0055 & 4.0 & 5 & 4.0 & 6.81 & 0.0077   \\ 
1.4 & 3 & 4.0 & 4.40 & 0.0119 & 2.8 & 1 & 1.0 & 0.59 & 0.0182 & 4.0 & 5 & 5.6 & 6.93 & 0.0070   \\ 
1.4 & 3 & 5.6 & 4.46 & 0.0094 & 2.8 & 1 & 2.0 & 0.86 & 0.0117 & 4.0 & 5 & 8.0 & 7.08 & 0.0057   \\ 
1.4 & 3 & 8.0 & 4.50 & 0.0068 & 2.8 & 1 & 4.0 & 1.12 & 0.0052 & 4.0 & 6 & 1.0 & 3.37 & 0.1585   \\ 
1.4 & 4 & 1.0 & 4.70 & 0.0574 & 2.8 & 1 & 5.6 & 1.23 & 0.0031 & 4.0 & 6 & 2.0 & 6.59 & 0.0353   \\ 
1.4 & 4 & 2.0 & 5.64 & 0.0170 & 2.8 & 1 & 8.0 & 1.34 & 0.0015 & 4.0 & 6 & 4.0 & 8.15 & 0.0083   \\ 
1.4 & 4 & 4.0 & 5.73 & 0.0127 & 2.8 & 2 & 1.0 & 1.21 & 0.0614 & 4.0 & 6 & 5.6 & 8.38 & 0.0069   \\ 
1.4 & 4 & 5.6 & 5.78 & 0.0098 & 2.8 & 2 & 2.0 & 1.81 & 0.0307 & 4.0 & 6 & 8.0 & 8.47 & 0.0054   \\ 
1.4 & 4 & 8.0 & 5.82 & 0.0072 & 2.8 & 2 & 4.0 & 2.50 & 0.0078 & 4.0 & 7 & 1.0 & 4.24 & 0.1584   \\ 
1.4 & 5 & 1.0 & 6.44 & 0.0329 & 2.8 & 2 & 5.6 & 2.84 & 0.0031 & 4.0 & 7 & 2.0 & 8.24 & 0.0270   \\ 
1.4 & 5 & 2.0 & 6.95 & 0.0200 & 2.8 & 2 & 8.0 & 3.03 & 0.0021 & 4.0 & 7 & 4.0 & 9.53 & 0.0083   \\ 
1.4 & 5 & 4.0 & 7.11 & 0.0127 & 2.8 & 3 & 1.0 & 1.89 & 0.1042 & 4.0 & 7 & 5.6 & 9.78 & 0.0066   \\ 
1.4 & 5 & 5.6 & 7.13 & 0.0094 & 2.8 & 3 & 2.0 & 3.03 & 0.0340 & 4.0 & 7 & 8.0 & 9.82 & 0.0048   \\   \hline
\end{tabular}
\end{table*}

\begin{table*}
\centering 
\caption{The practical general camera settings and resulted $\sigma_{\!m\!a\!x}$ and $\!M\!A\!E_{\!m\!a\!x}$ given in the Table~\ref{MainTable} in the  Appendix \ref{AppendixDEF} filtered for the pixel width $P\leq5.6\mu m$, the focal length $f<100 mm$, maximum error $\!M\!A\!E_{\!m\!a\!x}<0.01$ and $\sigma_{\!m\!a\!x}<5$. \label{GeneralTable}}   
\begin{tabular}{|*{3}{>{\centering}p{0.01\textwidth}>{\centering}p{0.01\textwidth}>{\centering}p{0.02\textwidth}>{\centering}p{0.01\textwidth}>{\centering}p{0.03\textwidth}>{\centering}p{0.02\textwidth}p{0.05\textwidth}|}}        
\toprule
$\frac{d_f}{m}$ & $f_n$ & $\frac{C_{\!m\!a\!x}}{[P]}$ & $\frac{P}{[\mu m]}$  & $\frac{f}{[mm]}$  & $\frac{\sigma_{\!m\!a\!x}}{[P]}$ & $\!M\!A\!E_{\!m\!a\!x}$ & $d_f$ & $f_n$ & $C_{\!m\!a\!x}$ & $P$ & $f$  & $\sigma_{\!m\!a\!x}$ & $\!M\!A\!E_{\!m\!a\!x}$ & $d_f$ & $f_n$ & $C_{\!m\!a\!x}$ & $P$  & $f$ & $\sigma_{\!m\!a\!x}$ & $\!M\!A\!E_{\!m\!a\!x}$  \\
\midrule 
1 & 1.0 & 1 & 2.0 & 04.23 & 1.24 & 0.0086 & 05 & 2.8 & 3 & 5.6 & 45.80 & 4.33 & 0.0065 & 020 & 2.8 & 1 & 5.6 & 53.06 & 1.24 & 0.0031   \\ 
1 & 1.0 & 1 & 4.0 & 05.98 & 1.43 & 0.0017 & 05 & 4.0 & 1 & 5.6 & 31.65 & 1.11 & 0.0038 & 020 & 2.8 & 2 & 4.0 & 63.40 & 2.51 & 0.0077   \\ 
1 & 1.0 & 1 & 5.6 & 07.07 & 1.49 & 0.0006 & 05 & 4.0 & 2 & 5.6 & 44.70 & 2.48 & 0.0056 & 020 & 2.8 & 2 & 5.6 & 74.99 & 2.85 & 0.0030   \\ 
1 & 1.0 & 2 & 2.0 & 05.98 & 2.85 & 0.0086 & 05 & 4.0 & 3 & 5.6 & 54.69 & 4.21 & 0.0027 & 020 & 2.8 & 3 & 4.0 & 77.62 & 4.22 & 0.0043   \\ 
1 & 1.0 & 2 & 4.0 & 08.45 & 3.10 & 0.0073 & 10 & 1.0 & 1 & 2.0 & 13.41 & 1.24 & 0.0086 & 020 & 2.8 & 3 & 5.6 & 91.81 & 4.33 & 0.0065   \\ 
1 & 1.0 & 2 & 5.6 & 09.99 & 3.13 & 0.0069 & 10 & 1.0 & 1 & 4.0 & 18.96 & 1.43 & 0.0017 & 020 & 4.0 & 1 & 5.6 & 63.40 & 1.11 & 0.0038   \\ 
1 & 1.0 & 3 & 5.6 & 12.22 & 4.49 & 0.0097 & 10 & 1.0 & 1 & 5.6 & 22.42 & 1.49 & 0.0006 & 020 & 4.0 & 2 & 5.6 & 89.60 & 2.49 & 0.0056   \\ 
1 & 1.4 & 1 & 4.0 & 07.07 & 1.34 & 0.0030 & 10 & 1.0 & 2 & 2.0 & 18.96 & 2.85 & 0.0085 & 040 & 1.0 & 1 & 2.0 & 26.82 & 1.24 & 0.0086   \\ 
1 & 1.4 & 1 & 5.6 & 08.36 & 1.43 & 0.0012 & 10 & 1.0 & 2 & 4.0 & 26.80 & 3.10 & 0.0073 & 040 & 1.0 & 1 & 4.0 & 37.93 & 1.43 & 0.0017   \\ 
1 & 1.4 & 2 & 4.0 & 09.99 & 3.04 & 0.0042 & 10 & 1.0 & 2 & 5.6 & 31.70 & 3.13 & 0.0069 & 040 & 1.0 & 1 & 5.6 & 44.87 & 1.49 & 0.0006   \\ 
1 & 1.4 & 2 & 5.6 & 11.81 & 3.10 & 0.0052 & 10 & 1.0 & 3 & 5.6 & 38.81 & 4.49 & 0.0097 & 040 & 1.0 & 2 & 2.0 & 37.93 & 2.85 & 0.0084   \\ 
1 & 1.4 & 3 & 2.0 & 08.66 & 4.22 & 0.0084 & 10 & 1.4 & 1 & 4.0 & 22.42 & 1.34 & 0.0029 & 040 & 1.0 & 2 & 4.0 & 53.63 & 3.10 & 0.0073   \\ 
1 & 1.4 & 3 & 5.6 & 14.44 & 4.46 & 0.0094 & 10 & 1.4 & 1 & 5.6 & 26.53 & 1.43 & 0.0012 & 040 & 1.0 & 2 & 5.6 & 63.45 & 3.13 & 0.0069   \\ 
1 & 2.0 & 1 & 4.0 & 08.45 & 1.23 & 0.0043 & 10 & 1.4 & 2 & 4.0 & 31.70 & 3.04 & 0.0043 & 040 & 1.0 & 3 & 5.6 & 77.69 & 4.49 & 0.0097   \\ 
1 & 2.0 & 1 & 5.6 & 09.99 & 1.34 & 0.0022 & 10 & 1.4 & 2 & 5.6 & 37.50 & 3.10 & 0.0052 & 040 & 1.4 & 1 & 4.0 & 44.87 & 1.34 & 0.0029   \\ 
1 & 2.0 & 2 & 4.0 & 11.93 & 2.84 & 0.0043 & 10 & 1.4 & 3 & 2.0 & 27.46 & 4.23 & 0.0087 & 040 & 1.4 & 1 & 5.6 & 53.09 & 1.43 & 0.0012   \\ 
1 & 2.0 & 2 & 5.6 & 14.10 & 3.03 & 0.0029 & 10 & 1.4 & 3 & 5.6 & 45.90 & 4.46 & 0.0095 & 040 & 1.4 & 2 & 4.0 & 63.45 & 3.04 & 0.0043   \\ 
1 & 2.0 & 3 & 4.0 & 14.59 & 4.33 & 0.0090 & 10 & 2.0 & 1 & 4.0 & 26.80 & 1.24 & 0.0043 & 040 & 1.4 & 2 & 5.6 & 75.06 & 3.10 & 0.0052   \\ 
1 & 2.0 & 3 & 5.6 & 17.24 & 4.40 & 0.0084 & 10 & 2.0 & 1 & 5.6 & 31.70 & 1.34 & 0.0022 & 040 & 1.4 & 3 & 2.0 & 54.95 & 4.23 & 0.0087   \\ 
1 & 2.8 & 1 & 4.0 & 09.99 & 1.12 & 0.0052 & 10 & 2.0 & 2 & 4.0 & 37.88 & 2.85 & 0.0043 & 040 & 1.4 & 3 & 5.6 & 91.91 & 4.46 & 0.0095   \\ 
1 & 2.8 & 1 & 5.6 & 11.81 & 1.23 & 0.0031 & 10 & 2.0 & 2 & 5.6 & 44.80 & 3.03 & 0.0029 & 040 & 2.0 & 1 & 4.0 & 53.63 & 1.24 & 0.0043   \\ 
1 & 2.8 & 2 & 4.0 & 14.10 & 2.50 & 0.0078 & 10 & 2.0 & 3 & 4.0 & 46.37 & 4.33 & 0.0091 & 040 & 2.0 & 1 & 5.6 & 63.45 & 1.34 & 0.0022   \\ 
1 & 2.8 & 2 & 5.6 & 16.66 & 2.84 & 0.0031 & 10 & 2.0 & 3 & 5.6 & 54.84 & 4.40 & 0.0084 & 040 & 2.0 & 2 & 4.0 & 75.82 & 2.85 & 0.0042   \\ 
1 & 2.8 & 3 & 4.0 & 17.24 & 4.22 & 0.0041 & 10 & 2.8 & 1 & 4.0 & 31.70 & 1.12 & 0.0052 & 040 & 2.0 & 2 & 5.6 & 89.70 & 3.03 & 0.0029   \\ 
1 & 2.8 & 3 & 5.6 & 20.37 & 4.33 & 0.0064 & 10 & 2.8 & 1 & 5.6 & 37.50 & 1.24 & 0.0031 & 040 & 2.0 & 3 & 4.0 & 92.84 & 4.34 & 0.0091   \\ 
1 & 4.0 & 1 & 5.6 & 14.10 & 1.11 & 0.0038 & 10 & 2.8 & 2 & 4.0 & 44.80 & 2.51 & 0.0077 & 040 & 2.8 & 1 & 4.0 & 63.45 & 1.12 & 0.0052   \\ 
1 & 4.0 & 2 & 5.6 & 19.88 & 2.47 & 0.0057 & 10 & 2.8 & 2 & 5.6 & 52.99 & 2.85 & 0.0031 & 040 & 2.8 & 1 & 5.6 & 75.06 & 1.24 & 0.0031   \\ 
1 & 4.0 & 3 & 5.6 & 24.29 & 4.20 & 0.0026 & 10 & 2.8 & 3 & 4.0 & 54.84 & 4.22 & 0.0043 & 040 & 2.8 & 2 & 4.0 & 89.70 & 2.51 & 0.0077   \\ 
5 & 1.0 & 1 & 2.0 & 09.48 & 1.24 & 0.0086 & 10 & 2.8 & 3 & 5.6 & 64.85 & 4.33 & 0.0065 & 040 & 4.0 & 1 & 5.6 & 89.70 & 1.11 & 0.0038   \\ 
5 & 1.0 & 1 & 4.0 & 13.40 & 1.43 & 0.0017 & 10 & 4.0 & 1 & 5.6 & 44.80 & 1.11 & 0.0038 & 070 & 1.0 & 1 & 2.0 & 35.49 & 1.24 & 0.0086   \\ 
5 & 1.0 & 1 & 5.6 & 15.85 & 1.49 & 0.0006 & 10 & 4.0 & 2 & 5.6 & 63.30 & 2.49 & 0.0056 & 070 & 1.0 & 1 & 4.0 & 50.18 & 1.43 & 0.0017   \\ 
5 & 1.0 & 2 & 2.0 & 13.40 & 2.85 & 0.0085 & 10 & 4.0 & 3 & 5.6 & 77.47 & 4.21 & 0.0028 & 070 & 1.0 & 1 & 5.6 & 59.37 & 1.49 & 0.0006   \\ 
5 & 1.0 & 2 & 4.0 & 18.94 & 3.10 & 0.0073 & 20 & 1.0 & 1 & 2.0 & 18.96 & 1.24 & 0.0086 & 070 & 1.0 & 2 & 2.0 & 50.18 & 2.85 & 0.0084   \\ 
5 & 1.0 & 2 & 5.6 & 22.40 & 3.13 & 0.0069 & 20 & 1.0 & 1 & 4.0 & 26.81 & 1.43 & 0.0017 & 070 & 1.0 & 2 & 4.0 & 70.96 & 3.10 & 0.0073   \\ 
5 & 1.0 & 3 & 5.6 & 27.42 & 4.49 & 0.0097 & 20 & 1.0 & 1 & 5.6 & 31.72 & 1.49 & 0.0006 & 070 & 1.0 & 2 & 5.6 & 83.95 & 3.13 & 0.0069   \\ 
5 & 1.4 & 1 & 4.0 & 15.85 & 1.34 & 0.0029 & 20 & 1.0 & 2 & 2.0 & 26.81 & 2.85 & 0.0085 & 070 & 1.4 & 1 & 4.0 & 59.37 & 1.34 & 0.0029   \\ 
5 & 1.4 & 1 & 5.6 & 18.75 & 1.43 & 0.0012 & 20 & 1.0 & 2 & 4.0 & 37.91 & 3.10 & 0.0073 & 070 & 1.4 & 1 & 5.6 & 70.24 & 1.43 & 0.0012   \\ 
5 & 1.4 & 2 & 4.0 & 22.40 & 3.04 & 0.0043 & 20 & 1.0 & 2 & 5.6 & 44.85 & 3.13 & 0.0069 & 070 & 1.4 & 2 & 4.0 & 83.95 & 3.04 & 0.0043   \\ 
5 & 1.4 & 2 & 5.6 & 26.49 & 3.10 & 0.0052 & 20 & 1.0 & 3 & 5.6 & 54.92 & 4.49 & 0.0097 & 070 & 1.4 & 2 & 5.6 & 99.32 & 3.10 & 0.0052   \\ 
5 & 1.4 & 3 & 2.0 & 19.40 & 4.22 & 0.0086 & 20 & 1.4 & 1 & 4.0 & 31.72 & 1.34 & 0.0029 & 070 & 1.4 & 3 & 2.0 & 72.71 & 4.23 & 0.0087   \\ 
5 & 1.4 & 3 & 5.6 & 32.43 & 4.46 & 0.0094 & 20 & 1.4 & 1 & 5.6 & 37.53 & 1.43 & 0.0012 & 070 & 2.0 & 1 & 4.0 & 70.96 & 1.24 & 0.0043   \\ 
5 & 2.0 & 1 & 4.0 & 18.94 & 1.24 & 0.0043 & 20 & 1.4 & 2 & 4.0 & 44.85 & 3.04 & 0.0043 & 070 & 2.0 & 1 & 5.6 & 83.95 & 1.34 & 0.0022   \\ 
5 & 2.0 & 1 & 5.6 & 22.40 & 1.34 & 0.0022 & 20 & 1.4 & 2 & 5.6 & 53.06 & 3.10 & 0.0052 & 070 & 2.8 & 1 & 4.0 & 83.95 & 1.12 & 0.0052   \\ 
5 & 2.0 & 2 & 4.0 & 26.76 & 2.85 & 0.0043 & 20 & 1.4 & 3 & 2.0 & 38.85 & 4.23 & 0.0087 & 070 & 2.8 & 1 & 5.6 & 99.32 & 1.24 & 0.0031   \\ 
5 & 2.0 & 2 & 5.6 & 31.65 & 3.03 & 0.0029 & 20 & 1.4 & 3 & 5.6 & 64.96 & 4.46 & 0.0095 & 100 & 1.0 & 1 & 2.0 & 42.42 & 1.24 & 0.0086   \\ 
5 & 2.0 & 3 & 4.0 & 32.76 & 4.33 & 0.0091 & 20 & 2.0 & 1 & 4.0 & 37.91 & 1.24 & 0.0043 & 100 & 1.0 & 1 & 4.0 & 59.98 & 1.43 & 0.0017   \\ 
5 & 2.0 & 3 & 5.6 & 38.73 & 4.40 & 0.0084 & 20 & 2.0 & 1 & 5.6 & 44.85 & 1.34 & 0.0022 & 100 & 1.0 & 1 & 5.6 & 70.97 & 1.49 & 0.0006   \\ 
5 & 2.8 & 1 & 4.0 & 22.40 & 1.12 & 0.0052 & 20 & 2.0 & 2 & 4.0 & 53.59 & 2.85 & 0.0042 & 100 & 1.0 & 2 & 2.0 & 59.98 & 2.85 & 0.0084   \\ 
5 & 2.8 & 1 & 5.6 & 26.49 & 1.24 & 0.0031 & 20 & 2.0 & 2 & 5.6 & 63.40 & 3.03 & 0.0029 & 100 & 1.0 & 2 & 4.0 & 84.82 & 3.10 & 0.0073   \\ 
5 & 2.8 & 2 & 4.0 & 31.65 & 2.51 & 0.0077 & 20 & 2.0 & 3 & 4.0 & 65.62 & 4.34 & 0.0091 & 100 & 1.4 & 1 & 4.0 & 70.97 & 1.34 & 0.0029   \\ 
5 & 2.8 & 2 & 5.6 & 37.43 & 2.84 & 0.0031 & 20 & 2.0 & 3 & 5.6 & 77.62 & 4.40 & 0.0084 & 100 & 1.4 & 1 & 5.6 & 83.96 & 1.43 & 0.0012   \\ 
5 & 2.8 & 3 & 4.0 & 38.73 & 4.22 & 0.0042 & 20 & 2.8 & 1 & 4.0 & 44.85 & 1.12 & 0.0052 & 100 & 1.4 & 3 & 2.0 & 86.91 & 4.23 & 0.0087   \\ 
5 & 2.8 & 3 & 5.6 & 45.80 & 4.33 & 0.0065 & 20 & 2.8 & 1 & 5.6 & 53.06 & 1.24 & 0.0031 & 100 & 2.0 & 1 & 4.0 & 84.82 & 1.24 & 0.0043   \\   \hline
\end{tabular}
\end{table*}

\end{document}